\documentclass{article}
\PassOptionsToPackage{numbers, compress}{natbib}
\usepackage[preprint]{neurips_2026}
\usepackage[utf8]{inputenc}
\usepackage[T1]{fontenc}
\usepackage{hyperref}
\usepackage{url}
\usepackage{booktabs}
\usepackage{amsfonts}
\usepackage{nicefrac}
\usepackage{microtype}
\usepackage{xcolor}

\usepackage{amsthm}
\newtheorem{theorem}{Theorem}
\newtheorem{lemma}{Lemma}
\newtheorem{proposition}{Proposition}

\newtheorem{corollary}{Corollary}
\newtheorem{definition}{Definition}

\newtheorem{assumption}{Assumption}[section]

\usepackage{cite}
\usepackage{enumerate}
\usepackage{graphics}
\usepackage{epsfig}
\usepackage{makecell}
\usepackage{wrapfig}
\usepackage{algpseudocode}
\usepackage[linesnumbered, ruled]{algorithm2e}
\SetKwRepeat{Do}{do}{while}
\usepackage{listings}
\usepackage{graphicx}
\usepackage{subfigure}
\usepackage{bbm}
\usepackage{enumitem}
\usepackage[export]{adjustbox}
\usepackage{changes}
\usepackage{multirow}
\usepackage{extarrows}
\usepackage[capitalize,noabbrev]{cleveref}
\usepackage{threeparttable}
\usepackage{marvosym}

\usepackage{color}
\usepackage{colortbl}
\definecolor{myorange}{HTML}{E7730D}
\definecolor{myblue}{HTML}{4594c1}
\usepackage{enumitem}
\usepackage[most]{tcolorbox}

\newcommand{\example}[1]{
\begin{tcolorbox}[
    enhanced,
    colback=white,
    colframe=white,
    leftrule=0.4mm,
    rightrule=0.4mm,
    toprule=0.4mm,
    bottomrule=0.4mm,
    arc=0mm,
    left=0pt,
    right=0pt,
    top=1pt,
    bottom=1pt,
    breakable,
    borderline north={0.4mm}{0pt}{myblue},
    borderline south={0.4mm}{0pt}{myblue}
]
{\textbf{\textcolor{myblue}{Example:}}} #1
\end{tcolorbox}
}

\newcommand{\contribution}[1]{
\begin{tcolorbox}[
    enhanced,
    colback=myblue!8!white,
    colframe=myblue,
    leftrule=2mm,
    rightrule=0mm,
    toprule=0mm,
    bottomrule=0mm,
    arc=0mm,
    left=5pt,
    right=5pt,
    top=2.5pt,
    bottom=2.5pt,
    breakable,
    leftlower=2mm,
    leftupper=2mm
]
\normalsize 
#1
\end{tcolorbox}
}

\title{Power Reinforcement Post-Training of Text-to-Image Models with Super-Linear Advantage Shaping}

\author{
Haoyuan Sun$^{1}\textsuperscript{*}$,
Jing Wang$^{2}\textsuperscript{*}$, 
Yuxin Song$^{2}\textsuperscript{*$\dagger$}$, 
Yu Lu$^{3}$, Bo Fang$^{4}$, Yifu Luo$^{5}$, \\
\textbf{Jun Yin$^{5}$, Pengyu Zeng$^{5}$, Miao Zhang$^{6}$, Tiantian Zhang$^{5}$, Xueqian Wang$^{5}$, Shijian Lu$^{1}\textsuperscript{\Letter}$} \\
$^{*}$ Equal Contribution \quad $^{\dagger}$ Project Leader \quad $^\text{\Letter}$ Corresponding Author \\
$^1$ Nanyang Technological University \quad
$^2$ Baidu Inc. \quad
$^3$ Zhejiang University \quad  \\
$^4$ City University of Hong Kong \quad
$^5$ Tsinghua University \quad
$^6$ Jimei University \quad  
}

\begin{document}

\maketitle

\begin{abstract}
Recently, post-training methods based on reinforcement learning, with a particular focus on Group Relative Policy Optimization (GRPO), have emerged as the robust paradigm for further advancement of text-to-image (T2I) models. However, these methods are often prone to reward hacking, wherein models exploit biases in imperfect reward functions rather than yielding genuine performance gains. In this work, we identify that normalization could lead to miscalibration and directly removing the prompt-level standard deviation term yields an optimal policy ascent direction that is linear in the advantage but still limits the separation of genuine signals from noise. To mitigate the above issues, we propose Super-Linear Advantage Shaping (SLAS) by revisiting the functional update from an information geometry perspective. By extending the Fisher–Rao information metric with advantage-dependent weighting, SLAS introduces a non-linear geometric structure that reshapes the local policy space. This design relaxes constraints along high-advantage directions to amplify informative updates, while tightening those in low-advantage regions to suppress illusory gradients. In addition, batch-level normalization is applied to stabilize training under varying reward scales. Extensive evaluations demonstrate that SLAS consistently surpasses the DanceGRPO baseline across multiple backbones and benchmarks. In particular, it yields faster training dynamics, improved out-of-domain performance on GenEval and UniGenBench++, and enhanced robustness to model scaling, while mitigating reward hacking and preserving semantic and compositional fidelity in generations.
\end{abstract}

\section{Introduction}
\label{Introduction}
Recent advances in flow-matching diffusion modeling \citep{lipman2022flow} have established a new frontier for visual generation. Image synthesis has reached a new level of maturity with the emergence of leading models such as  Stable Diffusion 3.5 \citep{esser2024scaling}, FLUX.1 \citep{flux2024}, and Qwen-Image \citep{wu2025qwen}, which collectively define the current state-of-the-art. Simultaneously, the success of reinforcement learning (RL) in enhancing large language model (LLM) reasoning \citep{guo2025deepseek} has catalyzed efforts to transfer GRPO-style training paradigms to diffusion models, as exemplified by Flow-GRPO \citep{liu2025flow} and DanceGRPO \citep{xue2025dancegrpo}.

Although these methods achieve noticeable improvements on certain metrics or benchmarks, they still suffer from the issue of reward hacking. Instead of promoting true improvements in image quality or semantic alignment, reward hacking encourage the model to capitalize on biases in the reward model \citep{duan2025badreward}. Consequently, the model could adopt shortcut solutions that inflate the reward signal, including overemphasis on reward-favored visual patterns or reliance on spurious correlations, leading to perceptually implausible or misaligned generations. As demonstrated in \citep{wang2025pref}, reward hacking can be attributed to minimal reward differences among generated images (illustrative examples in Appendix~\ref{Illustrative Examples of Low-Variance Property}), leading to illusory advantages. Based on this, let's consider the following example:
\example{We assume that CLIPScore is employed as the reward model. First, consider a group of reward values with noticeable inter-sample gaps [0.360,0.370,0.380,0.390], which exhibit clear separation in CLIPScore assessments, yielding group-relative advantages of [-1.342,-0.447,0.447,1.342]. By contrast, for another group with far narrower inter-sample gaps [0.366, 0.367, 0.368, 0.369], the group-relative advantages remain exactly identical at [-1.342, -0.447, 0.447, 1.342], despite the substantially reduced variance in raw rewards. The former group is more likely to lead to meaningful performance gains, whereas the latter may be attributed to stochastic noise.}  
Reflecting on this example, we argue that beyond inherently minimal reward differences, reward hacking under this paradigm arises from normalization-induced miscalibration. This miscalibration assigns comparable advantage values to two distinct types of samples: those that yield genuine training benefits, and those whose improvements stem from noise in the imperfect reward model. Formally, we present a detailed formulation in \Cref{standard_GRPO}, motivating improvements along three axes: data, reward design, and algorithm design. Furthermore, we focus on the algorithm design, specifically adjusting the prompt-level standard deviation term to reduce advantage overestimation and mitigate reward hacking. To this end, we first consider removing the prompt-level standard deviation term. By applying a Taylor expansion, we derive a local functional optimization problem that characterizes the proposed update. Solving this problem yields the optimal policy update in \Cref{optimal ascent}, from which we show that maximizing the expected return functional induces an ascent direction that is linear in the advantage function class.

Therefore, although removing the prompt-level standard deviation term mitigates miscalibration, its inherent linearity still limits the ability to separate informative advantages from noise under imperfect reward models. This motivates revisiting the functional local update problem in \Cref{functional local update problem} from an information geometry perspective \citep{ay2017information}. Based on this, we construct the $\gamma$-weighted variational metric on the tangent space of the probability simplex, extending the Fisher–Rao information metric to incorporate advantage-dependent geometry. This introduces a non-linear geometric structure designed to overcome the linearity bottleneck of prior approach in distinguishing genuine benefits from noise. Reshaping the local geometry of the policy space, it amplifies high-advantage update directions while suppressing noisy low-advantage ones. Through rigorous derivation in \Cref{optimal_ascent_super}, we can obtain the \textbf{S}uper-\textbf{L}inear \textbf{A}dvantage \textbf{S}haping (SLAS) form: $\widetilde{A_i} = \text{sign}(\Delta r) \cdot |\Delta r|^{1+\gamma}$. This design relaxes constraints on high-advantage directions to enable larger updates for informative samples, while tightening constraints on low-advantage, noise-dominated directions to suppress spurious updates under imperfect rewards. Additionally, in practice, removing the prompt-level standard deviation term could introduce sensitivity to reward scaling. We therefore further adopt batch-level standard deviation normalization, which stabilizes training by preventing gradient explosion caused by excessively large advantages as well as inefficient learning under excessively small scales.

Extensive experiments are conducted to evaluate the effectiveness of SLAS. Two backbones of different scales, SD1.4 (0.9B) \citep{rombach2022high} and FLUX.1 Dev (12B) \citep{flux2024} are utilized, with DanceGRPO \citep{xue2025dancegrpo} serving as the baseline under identical reward configurations and training settings, and both methods are trained on the publicly available HPD-v2 training set \citep{wu2023human}. From the training dynamics, SLAS consistently outperforms DanceGRPO, achieving faster and more stable improvements in both HPS-v2.1 and CLIPScore, while DanceGRPO exhibits slower gains and fails to improve the CLIPScore. On the GenEval benchmark \citep{ghosh2023geneval}, SLAS outperforms DanceGRPO, achieving an overall score of 0.53 compared to 0.51 on SD1.4, without degrading basic alignment. Moreover, SLAS demonstrates robustness across model scales: while DanceGRPO drops to 0.54 on FLUX.1 Dev due to reward hacking, SLAS avoids this degradation and improves the overall score to 0.67. On the UniGenBench++ benchmark \citep{wang2025unigenbench++}, SLAS consistently outperforms DanceGRPO: in the short prompt setting, it achieves an overall score of 62.60 over DanceGRPO’s 60.64, while in the more challenging long prompt setting, it further extends this lead with the overall score of 68.32 compared to 65.72. Further qualitative comparisons show that SLAS generates images with stronger semantic coherence, spatial consistency, and compositional reasoning, while effectively mitigating high-frequency artifacts (e.g., over-stylization and exaggerated contrast) commonly observed in DanceGRPO.

Our contributions are summarized as follows:
\contribution{
\begin{enumerate}[leftmargin=*, itemsep=0.2em]
\item We identify that normalization can lead to miscalibration in reinforcement post-training of text-to-image models, and further show that simply removing the prompt-level standard deviation term yields an optimal policy ascent direction that is linear in advantage.
\item We propose Super-Linear Advantage Shaping (SLAS), which is designed to amplify informative, high-advantage directions and suppress noisy, low-advantage updates under imperfect rewards, while ensuring training stability via batch-level normalization.
\item Extensive evaluations demonstrate that SLAS significantly surpasses the DanceGRPO baseline, exhibiting superior out-of-domain performance and robust cross-scenario generalization. 
\end{enumerate}
}

\section{Related Works}
\label{Related Works}
Powering the generation performance of text-to-image models via reinforcement learning has long attracted sustained attention from the research community. Early approaches \citep{blacktraining, fan2023dpok} relied on the PPO \citep{schulman2017proximal} paradigm, but were characterized by high computational overhead and training instability. Furthermore, preference-based methods \citep{wallace2024diffusion, yang2024using} advance this by enabling competitive alignment without the need for an explicit reward model. Recently, Group Relative Policy Optimization (GRPO) \citep{shao2024deepseekmath} has emerged as a scalable substitute. DanceGRPO \citep{xue2025dancegrpo} and Flow-GRPO \citep{liu2025flow} serve as representative studies that extend GRPO to visual generation, unifying diffusion and flow models through SDE reformulation and achieving stable optimization on various generative tasks.

Despite their effectiveness, such approaches can be compromised by illusory advantage signals, leading to reward hacking.  \citet{hong2026understanding} demonstrates that combining multiple reward functions alleviates the problem only to a limited extent, motivating the pursuit of more fundamental improvements. By integrating ratio normalization and gradient reweighting, GRPO-Guard \citep{wang2025grpo} alleviates this issue by moderating the clipping mechanism. Pref-GRPO \citep{wang2025pref} identifies that reward hacking occurs when minimal reward differences between images are exaggerated following normalization, and mitigates this problem through a pairwise preference-based GRPO approach that reformulates the optimization objective from score maximization to preference fitting. GARDO \citep{he2025gardo} incorporates a gated, adaptive regularization mechanism for precise control, coupled with a diversity-aware strategy to encourage comprehensive mode coverage, thereby mitigating the challenge of reward hacking. Further related works can be found in Appendix \ref{Additional Related Work Statement}. 

\section{Preliminaries}
\label{Preliminaries}
Group Relative Policy Optimization (GRPO) \citep{shao2024deepseekmath} employs a group-relative advantage formulation to stabilize policy updates. In the context of text-to-image models, given a group of $G$ generated images $\{\hat{z_0^i}\}_{i=0}^G$, the advantage of the $i$-th output is given by:
\begin{equation}
\hat{A}_t^i = \frac{R(\hat{z_0^i},c) - \text{mean}(R(\hat{z_0^k},c)_{k=1}^G)}{\text{std}(R(\hat{z_0^i},c)_{k=1}^G)}.
\end{equation}
Policy is optimized by maximizing the following objective:
\begin{equation}
\mathcal{J}(\theta)=\mathbb{E}_{c\sim\mathcal{D},\{\hat{z_0^i}\}_{i=1}^G}\frac{1}{G}\sum\limits_{i=1}^G\frac{1}{T}\sum\limits_{t=0}^{T-1}\min\Big(r_t^i(\theta)\hat{A}_t^i, clip(r_t^i(\theta)\hat{A}_t^i), 1-\epsilon, 1+\epsilon\Big) - \beta\mathbb{D}_{\text{KL}}(\pi_{\theta}||\pi_{\text{ref}}),
\end{equation}
where $r_t^i(\theta):= p_{\theta}(x_{t-1}^i|x_{t}^i,c)/p_{\theta_{\text{old}}}(x_{t-1}^i|x_{t}^i,c)$ denotes the importance sampling ratio, and the clip function is defined as $clip(x,l,r):=\max(\min(x,r),l)$.

Furthermore, to meet the exploration requirement, DanceGRPO \citep{xue2025dancegrpo} and Flow-GRPO \citep{liu2025flow} convert the deterministic ODE $dx_t = v_tdt$ to the following equivalent SDE:
\begin{equation}
dx_t = \Big(v_{\theta}(x_t, t)+\frac{\sigma_t^2}{2t}\big(x_t+(1-t)v_{\theta}(x_t,t)\big)\Big)dt + \sigma_td\omega_t,
\end{equation}
where $v_{\theta}(x_t,t)$ is a velocity field trained via flow matching \citep{lipman2022flow}, $d\omega_t$ represents Wiener process increments and $\sigma_t$ controls this stochasticity.  Discretizing it with Euler-Maruyama method yields the following update rule:
\begin{equation}
x_{t+\Delta t} = x_t + \big(v_{\theta}(x_t, t)+\frac{\sigma_t^2}{2t}\big(x_t+(1-t)v_{\theta}(x_t,t)\big)\big)\Delta t+\sigma_t \sqrt{\Delta t}\epsilon, \quad\epsilon\sim\mathcal{N}(0,I).
\end{equation}

\section{Stop Reinforcing Reward Hacking in your Image Generation Model}
\label{Stop Reinforcing Reward Hacking of your Image Generation Model}
\subsection{Problem Setup} Given a prompt $c$, a group of outputs $O=\{\hat{z_0^1},\hat{z_0^2},\cdots,\hat{z_0^G}\}$ is sampled from the policy model $p_{\theta_{\text{old}}}(\cdot|c)$. Let $\alpha\sim\boldsymbol{P}=(p_1,p_2,\cdots,p_n)$ denote true reward probability distribution over the outputs, which satisfies that $\sum_{i=1}^n p_i = 1$. Here, $p_i$ denotes the $i$-th element of the\textbf{ true underlying reward distribution} over outputs, which is unobservable in practice and exists only as a theoretical construct reflecting real-world situations. Setting that $\boldsymbol{\hat{P}} = (\hat{p_1}, \hat{p_2},\cdots, \hat{p_n})$ denotes the predicted reward probability distribution over the outputs that are generated by the policy model $p_{\theta_{\text{old}}}(\cdot|c)$. Here, $\hat{p_i}$ denotes the $i$-th component of the \textbf{empirical predicted reward distribution}, estimated through large-scale sampling. For other aspects, we adopt the same notation with DanceGRPO \citep{xue2025dancegrpo}. It is worth mentioning that the reward function assigns a nonzero reward $r(c, \hat{z_0};\alpha)$ only at $t=0$, while being zero at all other time steps. Here, $\alpha$ is introduced to represent the reward signal from a distributional perspective, as opposed to a conventional point estimate. Consider the sum of advantages over all $T$ steps, we denote that:
\begin{equation}
\sum \limits_{t = 1}^{T}A^{p_{\theta_{\text{old}}}}(c,t,\hat{z_t}, \hat{z}_{t-1})  \xlongequal{} A^{p_{\theta_{\text{old}}}}(c, \boldsymbol{\hat{P}}).
\end{equation}
Furthermore, we set that:
\begin{equation}
\boldsymbol{\eta}=(\eta_1, \eta_2, \cdots, \eta_n)=\mathbb{E}_{t\sim\mathcal{U}(1,T), \hat{z_t}'\sim\boldsymbol{\hat{P'}}} r(c, \hat{z_t};\alpha).
\end{equation}
Here, $\eta_i$ represents the expected reward associated with the $i$-th discrete histogram bin, averaged over all timesteps and rollouts generated by the policy model. Specifically, histogram binning discretizes the distribution into $n$ non-overlapping bins, each corresponding to a reward interval. This yields a discrete probability distribution, where each bin carries a probability mass indicating the likelihood that an output’s reward falls within it. Therefore, we have that:
\begin{equation}
\label{true advantage}
A^{p_{\theta_{\text{old}}}}(c, \boldsymbol{\hat{P}}) = \sum \limits_{k = 1}^{n}p_k\cdot \big(r(c, z_0;\alpha, p_k)-\eta_k\big).
\end{equation}
Here, $A^{p_{\theta_{\text{old}}}}(c, \boldsymbol{\hat{P}})$ denotes the true advantage for prompt $c$ and predicted reward distribution $\boldsymbol{\hat{P}}$ under the policy model. 

\subsection{Normalization Leads to Miscalibration}
\label{Normalization Leads to Miscalibration}
Building upon the above formulation and drawing inspiration from \citep{bereket2025uncalibrated}, we derive \Cref{standard_GRPO}. A detailed proof is provided in Appendix~\ref{Proof of Theorem 1}.
\begin{proposition}
\label{standard_GRPO}
Suppose the group number $G$ is large enough, and we set that:
\[
\sigma_k = \mathbb{E}_{(\hat{z_0^1}, \hat{z_0^2},\cdots,\hat{z_0^G})\sim p_{\theta_{\text{old}}}}\bigg[\text{std}\Big(r(c,\hat{z_0^1}; \alpha, p_k)_1, r(c,\hat{z_0^2};\alpha, p_k)_2, \cdots, r(c,\hat{z_0^G};\alpha, p_k)_G\Big)\bigg]
\]
denotes the expected reward standard deviation under the $k$-th discrete reward histogram. Then, the expected standardized GRPO advantage satisfies that:
\begin{equation}
\mathbb{E}_{\alpha\sim \boldsymbol{P}, (\hat{z_0^2},\cdots,\hat{z_0^G})\sim p_{\theta_{\text{old}}}(\cdot|c)}\Big[\hat{A}_{\text{std}}^{p_{\theta_{\text{old}}}}(c, T, \hat{z_0^1})\Big]
= \frac{G-1}{G} \cdot \sum \limits_{j = 1}^{n}\frac{1}{\sigma_j + \epsilon}\cdot p_j \cdot (r(c,z_0;\alpha,p_j)-\eta_j).
\end{equation}
\end{proposition}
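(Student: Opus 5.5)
The plan is to follow the argument of \citep{bereket2025uncalibrated}: first condition on which histogram bin the reward falls into, then handle the leave-one-out structure of the group mean, and finally treat the group standard deviation as asymptotically deterministic because $G$ is large.

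\textbf{Step 1: condition on the bin.} Write the expectation over $\alpha\sim\boldsymbol{P}$ as a mixture over the $n$ bins:
\[
\mathbb{E}_{\alpha\sim\boldsymbol{P}}[\cdot]=\sum_{j=1}^n p_j\,\mathbb{E}[\cdot\mid \alpha \in \text{bin } j].
\]
Inside bin $j$, the standardized advantage of the first sample is
\[
\hat{A}_{\text{std}}=\frac{r_1-\frac1G\sum_{k=1}^G r_k}{\text{std}(r_1,\dots,r_G)+\epsilon}.
\]
The numerator splits as
\[
r_1-\frac1G\sum_k r_k=\frac{G-1}{G}\Big(r_1-\frac{1}{G-1}\sum_{k\ge2} r_k\Big),
\]
and this identity is the source of the $\frac{G-1}{G}$ factor. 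The sample $\hat{z_0^1}$ is fixed, and $\hat{z_0^2},\dots,\hat{z_0^G}$ are i.i.d. from $p_{\theta_{\text{old}}}(\cdot|c)$. Hence the conditional expectation of the leave-one-out mean is the bin-wise expected reward $\eta_j$, by the definition of $\boldsymbol{\eta}$, so the numerator has expectation $\frac{G-1}{G}\big(r(c,z_0;\alpha,p_j)-\eta_j\big)$.

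\textbf{Step 2: handle the denominator, which is the main obstacle.} The denominator depends on the same samples as the numerator, so the expectation of the ratio is not the ratio of expectations. I would use the large-$G$ assumption as follows:

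\begin{enumerate}
\item By the law of large numbers, the empirical std concentrates around its expectation, so $\text{std}(r_1,\dots,r_G)\to\sigma_j$ in probability within bin $j$.
\item The $\epsilon>0$ keeps the map $x\mapsto 1/(x+\epsilon)$ bounded and Lipschitz on $[0,\infty)$.
\item Bounded rewards make the numerator uniformly bounded.
\end{enumerate}

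Together these give, by Slutsky's lemma with dominated convergence (or a first-order delta-method bound with $O(G^{-1/2})$ error), that $\mathbb{E}[N/(S+\epsilon)]=\mathbb{E}[N]/(\sigma_j+\epsilon)+o(1)$. The dependence of $r_1$ on the std enters only at order $1/G$ and is absorbed in the same error term. The proposition's equality is then read as holding up to this vanishing term, which the phrase ``$G$ large enough'' licenses. I would state this interpretation explicitly.

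\textbf{Step 3: reassemble.} Substituting Step 2 into the mixture from Step 1 gives
\[
\frac{G-1}{G}\sum_{j} \frac{p_j}{\sigma_j+\epsilon}\big(r(c,z_0;\alpha,p_j)-\eta_j\big).
\]
Comparing with \eqref{true advantage} then shows the miscalibration: the true advantage weights each bin by $p_j$ alone, while the standardized GRPO advantage reweights bins by $1/(\sigma_j+\epsilon)$.
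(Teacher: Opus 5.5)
Your proposal follows the same route as the paper: condition on the reward bin $\alpha$, get the factor $\frac{G-1}{G}$ from the leave-one-out split of the group mean (the paper first computes the no-std estimator), then treat the group standard deviation as the deterministic $\sigma_j$ because $G$ is large. The one difference is in that last step: you justify it by concentration plus Slutsky/dominated convergence (using $\epsilon>0$ and bounded rewards) and say explicitly that the equality holds only up to a vanishing error, whereas the paper simply substitutes $\mathbb{E}[\mathrm{std}]\approx\sigma_k$ into the denominator, supported by its lemmas on the $O(1/G)$ bias of the sample standard deviation; yours is the more careful version of the same argument.
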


This proposition reveals that the resulting advantage constitutes \textbf{a biased estimator} of the true advantage. Beyond the factor $(G-1)/G$, the estimator further incorporates a multiplicative inverse standard deviation term, which is expressed as $1/(\sigma_j+\epsilon)$. This could lead to a pronounced overestimation of the advantage. We provide further discussion on this in Appendix~\ref{Illustrative Examples of Low-Variance Property}.  Furthermore, we identify three key avenues for improvement: data, reward, and algorithm:
\contribution{
\begin{enumerate}[leftmargin=2.5em, itemsep=0em]
    \item[\textbf{\textit{\textcolor{myblue}{Imp1:}}}]
    Prompts whose sampled outcomes exhibit low reward variance should be filtered out.
    \item[\textbf{\textit{\textcolor{myblue}{Imp2:}}}]
    More advanced reward models with stronger discriminative power could be better.
    \item[\textbf{\textit{\textcolor{myblue}{Imp3:}}}]
   Adjusting the prompt-level standard deviation term could reduce the degree of advantage overestimation, thereby mitigating the risk of reward hacking.
\end{enumerate}
}
In this work, we focus on the third aspect above, specifically from an algorithmic perspective. Firstly, we study the variant of removing the prompt-level standard deviation term. With this adjustment, the advantage estimation can be expressed as:
\begin{equation}
\label{changed_advantage}
A_i = r_i-\text{mean}({r_1, r_2, \cdots, r_G}).
\end{equation}

\subsection{Optimal Policy Ascent is Linear in Advantage}       
Let $\mathcal{X}$ denotes the input space and $\mathcal{Y}$ denotes the output space. Consider the conditional probability measure space:
\[
\Pi : =\{\pi(y|x)|\pi(y|x)\in\mathcal{P}(\mathcal{Y}),\,\forall x\in\mathcal{X}\}. 
\]
Consider a conditional policy $\pi_{\theta}(y|x)\in\mathcal{P}(\mathcal{Y})$ as an element in a probability simplex over the output space $\mathcal{Y}$, parameterized implicitly by $\theta$. Therefore, we can define the expected return functional as:
\begin{equation}
\boldsymbol{\mathcal{F}}(\pi(y|x))=\mathbb{E}_{y\sim\pi(y|x)}[r(x,y)].
\end{equation}
Furthermore, the policy optimization can be cast as a constrained functional optimization problem:
\begin{equation}
\label{constrained functional optimization problem}
\max\limits_{\pi(y|x)} \boldsymbol{\mathcal{F}}(\pi(y|x)) \quad \text{s.t.} \;\mathbb{D}_{\text{KL}}(\pi(y|x)||\pi_{0}(y|x))\leq \delta.
\end{equation}
And this can be reformulated as the following objective:
\begin{equation}
\label{objective}
\max\limits_{\pi(y|x)}\{ \boldsymbol{\mathcal{F}}(\pi(y|x))-\frac{1}{\eta}\cdot\mathbb{D}_{\text{KL}}(\pi(y|x)||\pi_0(y|x)) \}.
\end{equation}
\begin{definition}
\label{admissible variations}
Let $\delta\pi(y|x)$ be an admissible variation, i.e., a signed measure satisfying:
\begin{equation}
\label{constrain}
\forall x, \int \delta \pi(y|x) dy = 0.
\end{equation}
\end{definition}
Furthermore, we characterize the functional form of the optimal ascent direction of the policy $\pi(y|x)$. Applying a Taylor expansion, the local functional update objective can be expressed as:
\begin{equation}
\label{functional local update problem}
\max\limits_{\delta\pi(y|x)}\{\delta \boldsymbol{\mathcal{F}}(\pi_0(y|x); \delta\pi(y|x)) - \frac{1}{2\eta}\int_{\mathcal{Y}}\frac{(\delta\pi(y|x))^2 }{\pi_0(y|x)} dy\}
\end{equation}
The solution to this problem yields the optimal update, which is summarized in \Cref{optimal ascent}. A detailed proof is provided in Appendix~\ref{Proof of Theorem 2}.
\setcounter{theorem}{1}
\begin{theorem}
\label{optimal ascent}
Considering all perturbations $\delta\pi(y|x)$ that satisfy the constraint in \Cref{constrain}, and noting that the functional $\boldsymbol{\mathcal{F}}(\pi(y|x))$ is defined on the measure space, we assume that the policy $\pi(y|x)$ is absolutely continuous. Then, the optimal ascent is characterized by the following result:
\begin{equation}
\delta\pi(y|x)\propto\pi_0(y|x) \cdot A(x, y),
\end{equation}
where $\pi_0(y|x)$ denotes the current policy function, $A(x, y) = r(x,y) + \lambda(x)$ represents the advantage function class.
\end{theorem}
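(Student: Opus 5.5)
We solve the local update problem in \Cref{functional local update problem} pointwise in $x$, with a Lagrange multiplier for the admissibility constraint \Cref{constrain}. Since $\boldsymbol{\mathcal{F}}(\pi)=\int_{\mathcal{Y}}\pi(y|x)\,r(x,y)\,dy$ is linear in $\pi$, its first variation at $\pi_0$ in the direction $\delta\pi$ is exactly
\[
\delta\boldsymbol{\mathcal{F}}(\pi_0;\delta\pi)=\int_{\mathcal{Y}} r(x,y)\,\delta\pi(y|x)\,dy .
\]
Absolute continuity of the policies lets us treat $\delta\pi(\cdot|x)$ as a density with respect to the same reference measure as $\pi_0(\cdot|x)$. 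The quadratic term $\frac{1}{2\eta}\int(\delta\pi)^2/\pi_0\,dy$ is then the second-order expansion of the KL penalty in \Cref{objective}, i.e.\ the Fisher--Rao metric. We take it as given, as the paper does.

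For each fixed $x$ we form the Lagrangian
\[
\mathcal{L}_x(\delta\pi,\lambda)=\int_{\mathcal{Y}} r(x,y)\,\delta\pi(y|x)\,dy-\frac{1}{2\eta}\int_{\mathcal{Y}}\frac{(\delta\pi(y|x))^2}{\pi_0(y|x)}\,dy+\lambda(x)\int_{\mathcal{Y}}\delta\pi(y|x)\,dy .
\]
Next we take the Gâteaux derivative in an arbitrary direction $h(\cdot)$ and set it to zero for all $h$. By the fundamental lemma of the calculus of variations this gives the pointwise stationarity condition
\[
r(x,y)-\frac{1}{\eta}\,\frac{\delta\pi(y|x)}{\pi_0(y|x)}+\lambda(x)=0 \quad\text{on } \operatorname{supp}\pi_0(\cdot|x).
\]
Hence $\delta\pi(y|x)=\eta\,\pi_0(y|x)\bigl(r(x,y)+\lambda(x)\bigr)=\eta\,\pi_0(y|x)A(x,y)$, which is the claimed proportionality. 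Imposing \Cref{constrain} fixes $\lambda(x)=-\mathbb{E}_{y\sim\pi_0(\cdot|x)}[r(x,y)]$. So $A$ is a genuine centered advantage, consistent with \Cref{changed_advantage}, and the family $r+\lambda(x)$ is the advantage function class of the statement.

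To confirm that the stationary point is the maximizer, we note that the objective is a linear functional minus a positive-definite quadratic form on the weighted space $L^2(1/\pi_0)$, hence strictly concave. Restricted to the affine (here linear) subspace of admissible variations, it has at most one stationary point, which is the unique global maximum.

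The main obstacle is making the variational step rigorous on a measure space. We need the admissible variations to lie in $L^2(1/\pi_0)$ so the quadratic term is finite. We also need $\delta\pi$ to vanish where $\pi_0=0$, which absolute continuity provides. We expect this to be handled by restricting to variations absolutely continuous with respect to $\pi_0$ and working in that Hilbert space. There the stationarity condition is equivalent to the Riesz representation of the linear functional $\delta\pi\mapsto\int(r+\lambda)\,\delta\pi$ with respect to the inner product $\langle f,g\rangle=\int fg/\pi_0$. That representation yields $\delta\pi\propto\pi_0(r+\lambda)$ directly.
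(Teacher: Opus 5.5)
Your proposal is correct and follows the paper's proof in substance: both form the Lagrangian with a multiplier $\lambda(x)$ for \Cref{constrain}, compute the first variation term by term, and read off $\delta\pi(y|x)=\eta\,\pi_0(y|x)\,(r(x,y)+\lambda(x))$ from the pointwise stationarity condition. The differences are minor: the paper derives the second-order KL expansion explicitly rather than taking it as given, and it identifies $r+\lambda(x)$ with the advantage class via the baseline-invariance Lemma~\ref{equivalent_function}, whereas you solve the constraint for $\lambda(x)=-\mathbb{E}_{y\sim\pi_0(\cdot|x)}[r(x,y)]$ and add a strict-concavity argument for maximality that the paper omits.
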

Noting that $A(x, y) = r(x,y) + \lambda(x)$, where $\lambda(x)$ serves as a baseline function, we recover a formulation consistent with \Cref{changed_advantage}. Furthermore, \Cref{optimal ascent} implies that the optimal ascent direction is proportional to the advantage function $A(x,y)$, establishing a linear relationship.

\section{Super-Linear Advantage Shaping}
\label{Super-Linear Advantage Shaping}
Although removing the prompt-level standard deviation term alleviates miscalibration, the resulting linearity limits its ability to distinguish between genuine advantages and noise-dominated ones when the reward function is imperfect. To this end, we revisit the functional local update problem in \Cref{functional local update problem} from an information geometry perspective \citep{ay2017information}. Specifically, the quadratic functional optimization is carried out on the tangent space of the probability simplex, with the tangent space endowed with the following metric:
\begin{equation}
\big<\delta\pi_1(y|x),\delta\pi_2(y|x)\big>_{\pi_0(y|x)} = \int_{\mathcal{Y}}\frac{\delta\pi_1(y|x)\delta\pi_2(y|x)}{\pi_0(y|x)} dy,
\end{equation}
where perturbations $\delta\pi_1(y|x)$ and $\delta\pi_2(y|x)$, also referred to as admissible variations, lie in the tangent space of the probability simplex. Indeed, this is precisely the Fisher–Rao information metric.

In text-to-image tasks, due to inherent limitations of the reward model, samples with low advantage contribute little to training gains and are more prone to reward hacking. Conversely, high-advantage samples yield larger and safer training improvements. Therefore, in the tangent space of the probability simplex, we can consider imposing more relaxed constraints on those samples that offer higher and safer training gains. Motivated by this, we define the following $\gamma$-weighted variational metric:
\begin{equation}
\label{gamma-weighted variational metric}
\big<\delta\pi_1(y|x),\delta\pi_2(y|x)\big>_{\pi_0(y|x)}^\gamma= \int_{\mathcal{Y}}\frac{\delta\pi_1(y|x)\delta\pi_2(y|x)}{\Phi_\gamma(|A(x,y)|)\pi_0(y|x)} dy,
\end{equation}
where $\Phi_\gamma(\cdot)$ is a monotone non-decreasing mapping. Among the valid formulations, we adopt the power-law form $\Phi_\gamma(|A(x, y)|) = |A(x, y)|^\gamma$ for simplicity. Such a paradigm reshapes the geometry of the policy space itself by assigning advantage-dependent curvature to the probability simplex. As a result, the functional local update problem in \Cref{functional local update problem} can be further expressed as:
\begin{equation}
\label{monotone non-decreasing mapping}
\max\limits_{\delta\pi(y|x)}\{\delta \boldsymbol{\mathcal{F}}(\pi_0(y|x); \delta\pi(y|x)) - \frac{1}{2\eta}\int_{\mathcal{Y}}\frac{(\delta\pi(y|x))^2 }{|A(x, y)|^\gamma\cdot\pi_0(y|x)} dy\}.
\end{equation}
In a similar manner, we can solve the resulting optimization problem. The results are presented in \Cref{optimal_ascent_super}, with the corresponding proofs provided in Appendix \ref{Proof of Theorem 3}.
\begin{theorem}
\label{optimal_ascent_super}
Under the same conditions as in \Cref{optimal ascent}, after transforming to the tangent space of the probability simplex as expressed in \Cref{gamma-weighted variational metric}, the optimal ascent is characterized by:
\begin{equation}
\delta\pi(y|x)\propto\pi_0(y|x) \cdot \text{sign}(A(x, y))|A(x,y)|^{1+\gamma},
\end{equation}
where $\pi_0(y|x)$ denotes the current policy function, $\text{sign}(\cdot)$ denotes the sign function, $A(x, y)$ represents the advantage function class.
\end{theorem}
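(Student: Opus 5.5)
We solve the modified local problem \Cref{monotone non-decreasing mapping} exactly as we would solve \Cref{functional local update problem} for \Cref{optimal ascent}. The one change is that the quadratic penalty now carries the extra weight $|A(x,y)|^{-\gamma}$. We work pointwise in $x$ and treat $A(x,y)$ as fixed during the variation, since it is computed under $\pi_0$ and does not depend on $\delta\pi$.

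First we compute the first variation of the linear functional. Because $\boldsymbol{\mathcal{F}}(\pi)=\int_{\mathcal{Y}}\pi(y|x)r(x,y)\,dy$ is linear in $\pi$, we have $\delta\boldsymbol{\mathcal{F}}(\pi_0;\delta\pi)=\int_{\mathcal{Y}} r(x,y)\,\delta\pi(y|x)\,dy$. Next we impose the admissibility constraint \Cref{constrain} with a Lagrange multiplier $\lambda(x)$. This gives the Lagrangian
\[
\mathcal{L}(\delta\pi)=\int_{\mathcal{Y}} \big(r(x,y)+\lambda(x)\big)\delta\pi(y|x)\,dy-\frac{1}{2\eta}\int_{\mathcal{Y}}\frac{(\delta\pi(y|x))^2}{|A(x,y)|^{\gamma}\pi_0(y|x)}\,dy .
\]
Taking the pointwise Gâteaux derivative in the direction of an arbitrary test perturbation and setting it to zero yields
\[
r(x,y)+\lambda(x)-\frac{1}{\eta}\frac{\delta\pi(y|x)}{|A(x,y)|^{\gamma}\pi_0(y|x)}=0 .
\]
Here $r+\lambda$ is exactly the advantage class $A(x,y)$ of \Cref{optimal ascent}. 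So
\[
\delta\pi(y|x)=\eta\,\pi_0(y|x)\,|A(x,y)|^{\gamma}A(x,y)=\eta\,\pi_0(y|x)\,\mathrm{sign}(A(x,y))\,|A(x,y)|^{1+\gamma},
\]
which is the claimed proportionality. The weight $|A|^{\gamma}$ is nonnegative, so the objective is strictly concave wherever $|A|>0$, and this stationary point is the maximizer.

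Finally, $\lambda(x)$ is fixed by substituting back into \Cref{constrain}, giving $\int \pi_0\,\mathrm{sign}(A)|A|^{1+\gamma}\,dy=0$.

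The main obstacle is this consistency step. In \Cref{optimal ascent}, $\lambda$ is solved explicitly as $-\mathbb{E}_{\pi_0}[r]$. Here, however, $A=r+\lambda$ appears both in the metric weight and in the stationarity equation, so $\lambda(x)$ is defined only implicitly as the root of $\lambda\mapsto\mathbb{E}_{\pi_0}[\mathrm{sign}(r+\lambda)|r+\lambda|^{1+\gamma}]$. We plan to handle it in two steps.

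1. Show a root exists and is unique. The map is continuous and strictly increasing in $\lambda$, because $s\mapsto \mathrm{sign}(s)|s|^{1+\gamma}$ is strictly increasing. It also tends to $\pm\infty$ as $\lambda\to\pm\infty$.
2. Treat the metric weight as frozen at the advantage computed from $\pi_0$, which is the same fixed-point convention as in the derivation of \Cref{optimal ascent}. Since the theorem asserts only proportionality with $A$ in the advantage function class, we would not compute $\lambda$ explicitly.

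We would also remark on the degenerate set $\{A=0\}$. There the weight $|A|^{-\gamma}$ blows up and forces $\delta\pi=0$, which agrees with the formula. The absolute continuity assumption keeps the integrals well-defined.
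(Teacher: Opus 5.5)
Your proposal is correct. Up to the stationarity condition it is the paper's proof: the same Lagrangian with a multiplier, the same first variation, and the same equation $\delta\pi=\eta\,|A|^{\gamma}\pi_0\,(r+\lambda')$.

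Where you genuinely differ is the closing step. The paper finishes by invoking \Cref{equivalent_function} to assert $r+\lambda'\sim r\sim A$, and then substitutes this inside $|A|^{\gamma}(\cdot)$. That lemma only permits dropping a $y$-independent shift from the whole integrand. Here the discrepancy $|A|^{\gamma}(\lambda'-\lambda)$ depends on $y$, so the substitution does not literally follow. Indeed, if the weight is frozen at the mean-centered $A=r-\bar r$ with $\bar r=\mathbb{E}_{\pi_0}[r]$, the constraint forces $\lambda'=-\mathbb{E}_{\pi_0}[|A|^{\gamma}r]/\mathbb{E}_{\pi_0}[|A|^{\gamma}]$. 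Moreover, $\pi_0\,\mathrm{sign}(A)|A|^{1+\gamma}$ is not even admissible unless $\mathbb{E}_{\pi_0}[\mathrm{sign}(A)|A|^{1+\gamma}]=0$.

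Your implicit root $\lambda^*$ is exactly what repairs this. Build the weight from $A^*=r+\lambda^*$. Then the constraint $\int_{\mathcal{Y}}|A^*|^{\gamma}\pi_0\,(r+\lambda')\,dy=0$ is affine in $\lambda'$, with slope $\mathbb{E}_{\pi_0}|A^*|^{\gamma}>0$, and it vanishes at $\lambda'=\lambda^*$. So the multiplier coincides with the baseline inside the weight, the optimizer is admissible, and the formula holds exactly. You should state this one-line check explicitly, since it is what makes your frozen-weight convention rigorous.

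The trade-off between the two routes is as follows. The paper's route buys brevity and a loose reading of $A$ as any member of the class. Yours buys an exact statement and identifies which representative the theorem is actually about. The price is that $\lambda^*\neq-\bar r$ in general. Consequently, the mean-centered $\Delta r$ in \Cref{reshaped super-linear advantage} agrees with the theorem only when $\mathbb{E}_{\pi_0}[\mathrm{sign}(r-\bar r)|r-\bar r|^{1+\gamma}]=0$, e.g.\ for $\gamma=0$ or for centered rewards symmetric under $\pi_0$.

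Three small corrections:
\begin{itemize}
\item There is no fixed point in \Cref{optimal ascent}, since the Fisher--Rao weight does not depend on $A$. Do not describe your convention as ``the same'' as there.
\item The weight must be frozen at $A^*$ specifically, not at an arbitrary advantage computed from $\pi_0$.
\item Continuity and the $\pm\infty$ limits in your step~1 need $\mathbb{E}_{\pi_0}|r|^{1+\gamma}<\infty$, which is automatic for bounded rewards.
\end{itemize}
Your concavity and $\{A=0\}$ remarks are correct additions that the paper omits.
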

\begin{wrapfigure}{r}{0.55\linewidth}
    \centering
    \includegraphics[width=1.0\linewidth]{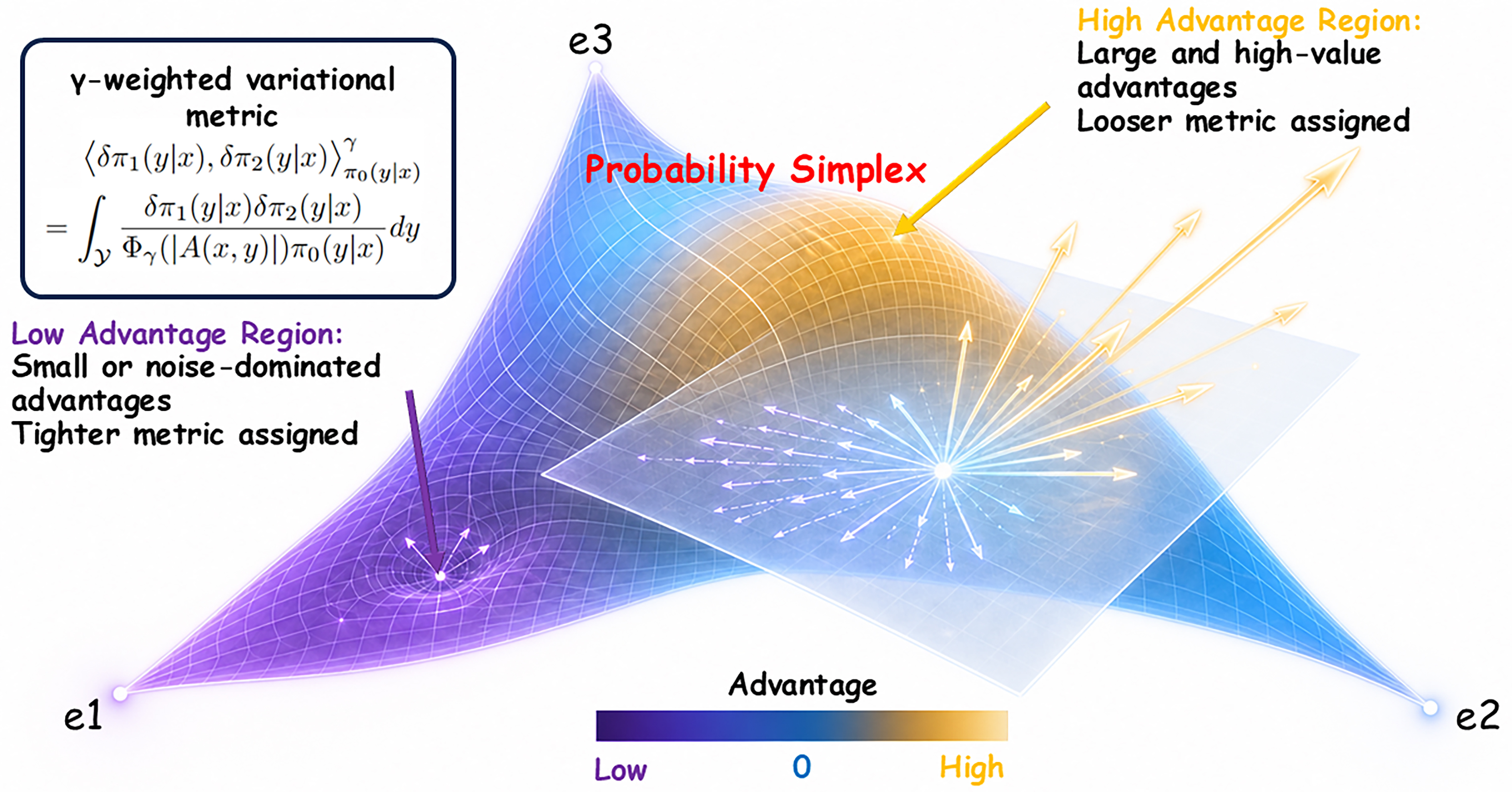}
    \caption{Super-Linear Advantage Shaping. $\gamma$-weighted variational metric $\Phi_{\gamma}|A(x,y)|$ reshapes local geometry of the probability simplex, amplifying high-advantage directions while suppressing noisy, low-advantage ones.}
    \label{NeurIPS_2026_mainx}
\end{wrapfigure}
From \Cref{optimal_ascent_super}, it can be concluded that introducing advantage dependent weighting $\Phi_\gamma(|A(x,y)|)$ into vanilla Fisher–Rao information metric is equivalent to an advantage-dependent rescaling in the tangent space. To be specific, those associated with large advantages are endowed with a looser metric, allowing larger updates; conversely, those associated with small or noise-dominated advantages are assigned a tighter metric, thereby mitigating excessive updates induced by weak or noisy advantage signals. This super-linear transformation serves as an advantage-based preconditioning, steering updates toward samples of higher informational value in the information-geometric sense.

Moreover, from an optimization standpoint, the update can be seen as executing steepest ascent under a task-adaptive Riemannian metric. Instead of directly modifying the reward or advantage function, the proposed formulation reshapes the underlying geometry of the space, leading to a super-linear form of advantage shaping. From this perspective, super-linear advantage shaping enhances robustness under imperfect reward models by attenuating the influence of low-confidence samples while permitting larger updates along directions supported by strong advantage signals. Accordingly, we define the practically utilized advantage function in \Cref{changed_advantage} after super-linear shaping as:
\begin{equation}
\label{reshaped super-linear advantage}
\widetilde{A_i} = \text{sign}(\Delta r) \cdot |\Delta r|^{1+\gamma},
\end{equation}
where $\Delta r = r_i-\text{mean}({r_1, r_2, \cdots, r_G})$ represents the vanilla linear advantage. Further discussion of $\gamma$, specifically its trust-region bounds, is provided in Appendix~\ref{Trust-Region Bounds on gamma}. 

Moreover, simply removing the standard deviation term introduces the sensitivity to reward scaling. When the reward scale shifts, applying only mean centering causes the scale of the advantage function to change accordingly. An excessively large advantage scale may lead to gradient explosion, while an overly small scale can degrade learning efficiency. From this perspective, standard deviation normalization mitigates scale sensitivity and stabilizes training. Therefore, considering these factors, we adopt \textbf{batch-level standard deviation normalization} in place of its prompt-level counterpart.

\section{Experiments}
\label{Experiment}

\setlength{\tabcolsep}{1mm}
\begin{table*}[t]
\centering
\caption{Quantitative Comparison on the GenEval Benchmark. Obj.: Object. Attr.: Attribution.}
\small
\begin{threeparttable}
\begin{adjustbox}{max width=\textwidth}
\label{geneval}
\begin{tabular}{cccccccc}
\toprule
\textbf{Model \& Method} & \textbf{Single Obj.} & \textbf{Two Obj.} & \textbf{Counting} & \textbf{Colors} & \textbf{Position} & \textbf{Color Attr.} &\textbf{Overall}$\uparrow$ \\
\midrule
\multicolumn{8}{c}{\textbf{Baseline and Our Method}} \\
SD1.4 \citep{rombach2022high} & 0.98 & 0.36 & 0.35 & 0.73 & 0.01 & 0.07 & 0.42 \\
SD1.4 + DanceGRPO & 0.98 & 0.67 & 0.44 & 0.77 & 0.11 & 0.10 & 0.51 \\
SD1.4 + SLAS &\cellcolor{myblue!50!white}\textbf{0.98} & \cellcolor{myblue!50!white}\textbf{0.71} & \cellcolor{myblue!50!white}\textbf{0.42} & \cellcolor{myblue!50!white}\textbf{0.82} & \cellcolor{myblue!50!white}\textbf{0.11} & \cellcolor{myblue!50!white}\textbf{0.15} & \cellcolor{myblue!50!white}\textbf{0.53} \\ 
FLUX.1 Dev \citep{flux2024} & 0.98 & 0.81 & 0.74 & 0.79 & 0.22 & 0.45 & 0.66 \\
FLUX.1 Dev + DanceGRPO & 0.88 & 0.65 & 0.52 & 0.62 & 0.24 & 0.35 & 0.54 \\
FLUX.1 Dev + SLAS & \cellcolor{myblue!50!white}\textbf{1.00} & \cellcolor{myblue!50!white}\textbf{0.86} & \cellcolor{myblue!50!white}\textbf{0.64} & \cellcolor{myblue!50!white}\textbf{0.77} & \cellcolor{myblue!50!white}\textbf{0.23} & \cellcolor{myblue!50!white}\textbf{0.49} & \cellcolor{myblue!50!white}\textbf{0.67} \\
\bottomrule
\end{tabular}
\end{adjustbox}
\label{GenEval_results}
\end{threeparttable}
\end{table*}

\begin{figure}
\centering
\includegraphics[width=1.0\linewidth]{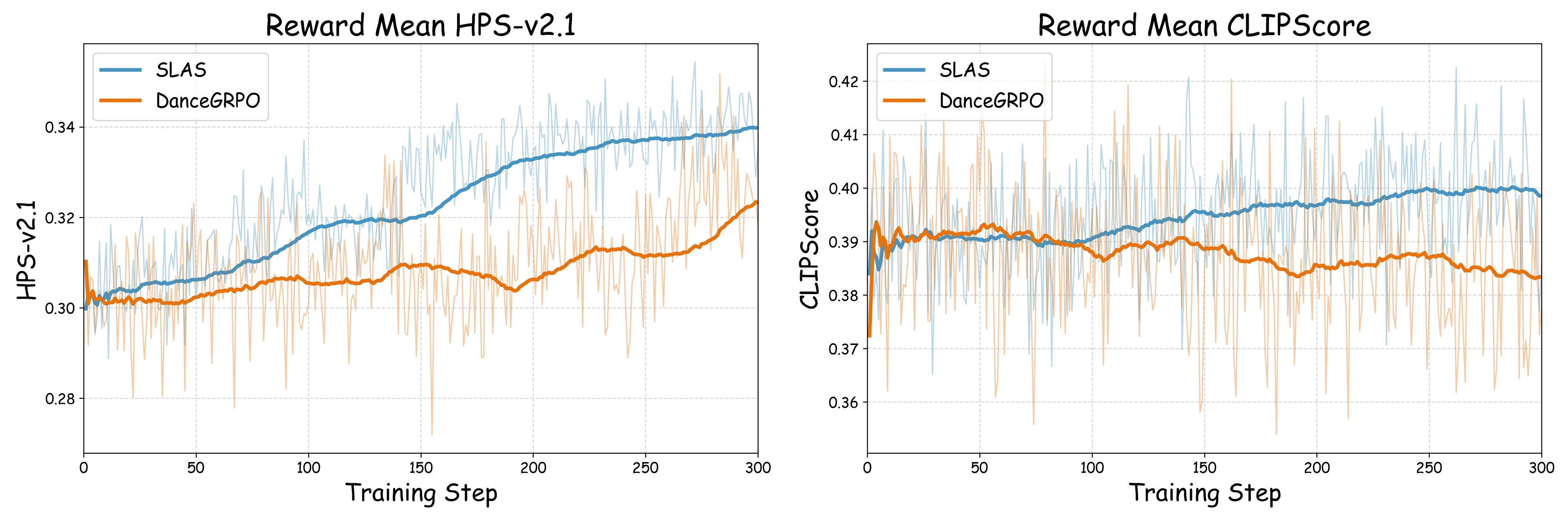}
\caption{Training Dynamics on FLUX.1 Dev. \textit{Left:} Reward Mean of HPS-v2.1. \textit{Right:} Reward Mean of CLIPScore. \textcolor{myblue}{Blue lines} represent the \textcolor{myblue}{SLAS}, while \textcolor{myorange}{orange lines} denote the \textcolor{myorange}{DanceGRPO}.}
\label{Reward_Plot}
\end{figure}

\subsection{Experimental Setup}
\label{Experimental Setup}
\paragraph{Implementation Details:} Throughout our experiments, we adopt DanceGRPO \citep{xue2025dancegrpo} as the baseline for comparison. Two models with different parameter scales, Stable Diffusion v1.4 (0.9B) \citep{rombach2022high} and FLUX.1 Dev (12B) \citep{flux2024}, are selected as backbones to validate the effectiveness of our method. The training reward is defined as a combination of scores from CLIP \citep{radford2021learning} and HPS-v2.1 \citep{wu2023human}, consistent with the configuration used in DanceGRPO \footnote{\label{github}https://github.com/XueZeyue/DanceGRPO/issues/19}. Specifically, the HPS-v2.1 and CLIP reward components are weighted at a ratio of 1:1 for SD1.4, and 0.7:1.4 for FLUX.1 Dev. Regarding training data, the original DanceGRPO paper uses a carefully curated internal prompt set; in contrast, to ensure a fair comparison and to assess the robustness of our method, we employ the publicly available HPD-v2 training set \citep{wu2023human}. Experiments for FLUX.1 Dev are conducted on two nodes, each equipped with eight NVIDIA A800 GPUs, while those for SD1.4 are conducted on a single node. All other experimental settings are consistent with those of DanceGRPO.
\paragraph{Evaluation Details:} Our evaluation primarily focuses on assessing the model's out-of-domain performance and cross-scenario generalization capability. Firstly, we conduct evaluation using the GenEval \citep{ghosh2023geneval}, an open-source object-centric framework. This framework enables fine-grained, instance-level text-image alignment assessment, going beyond conventional holistic metrics to verify compositional properties for text-to-image generative models. Moreover, to achieve a more fine-grained and multi-faceted evaluation, we adopt the UniGenBench++ \citep{wang2025unigenbench++}. It organizes evaluation around 10 primary dimensions covering diverse semantic requirements, enabling a comprehensive assessment of the model generalization capability.

\begin{table}[t]
\centering
\caption{Quantitative Comparison on the UniGenBench++ Benchmark; Gemini 2.5 Pro \citep{comanici2025gemini} is used as the MLLM for evaluation. We report primary results of the \textcolor{myblue}{English Short Prompt Evaluation} and the \textcolor{myorange}{English Long Prompt Evaluation}. The detailed results are presented in Appendix~\ref{Detailed Results on the UniGenBench++ benchmark}.}
\resizebox{\linewidth}{!}{
\begin{tabular}{l|c|c|c|c|c|c|c|c|c|c|c}
\toprule
\rowcolor{myblue!75!white}\multicolumn{12}{c}{\textbf{English Short Prompt Evaluation--Primary}} \\
\midrule
\textbf{Method} & \textbf{Overall} & \textbf{Style} & \textbf{World Know.} & \textbf{Attribute} & \textbf{Action} & \textbf{Relation} & \textbf{Logic.Reason.} & \textbf{Grammar} & \textbf{Compound} & \textbf{Layout} & \textbf{Text} \\
\midrule
FLUX.1 Dev & 61.08 &86.20 & 87.66 & 63.78 & 61.69 & 65.99 & 27.52 & 60.96 & 47.29 & 71.08 & 37.64\\
DanceGRPO& 60.64 & 76.30 & 85.28 & 65.17 & 61.69 & 66.62 & 37.61 & 56.95 & 57.73 & 70.90 & 28.16\\
SLAS& 62.60 & 82.30 & 81.96 & 67.84 & 70.15 & 69.16 & 41.06 & 54.41 & 57.22 & 73.69 & 28.16\\ \midrule
\rowcolor{myorange!75!white}\multicolumn{12}{c}{\textbf{English Long Prompt Evaluation--Primary}} \\
\midrule
\textbf{Method} & \textbf{Overall} & \textbf{Style} & \textbf{World Know.} & \textbf{Attribute} & \textbf{Action} & \textbf{Relation} & \textbf{Logic.Reason.} & \textbf{Grammar} & \textbf{Compound} & \textbf{Layout} & \textbf{Text} \\
\midrule
FLUX.1 Dev& 69.22 & 90.37  & 89.45  & 80.18  & 65.30  & 68.64  & 47.06  & 70.15  & 67.06  & 78.41  & 35.60  \\
DanceGRPO& 65.72 & 79.24  & 87.28  & 77.20  & 65.34  & 67.40  & 49.02  & 63.39  & 67.57  & 76.90  & 23.91   \\
SLAS& 68.32  & 83.55  & 87.43  & 79.20  & 67.35  & 69.45  & 53.19  &63.78  & 69.39  & 79.44  & 30.43   \\ \midrule
\end{tabular}
}
\vspace{-1em}
\label{UniGenBench++ BenchmarK}
\end{table}
\begin{figure*}[t]
\centering
\includegraphics[width=1.0\linewidth]{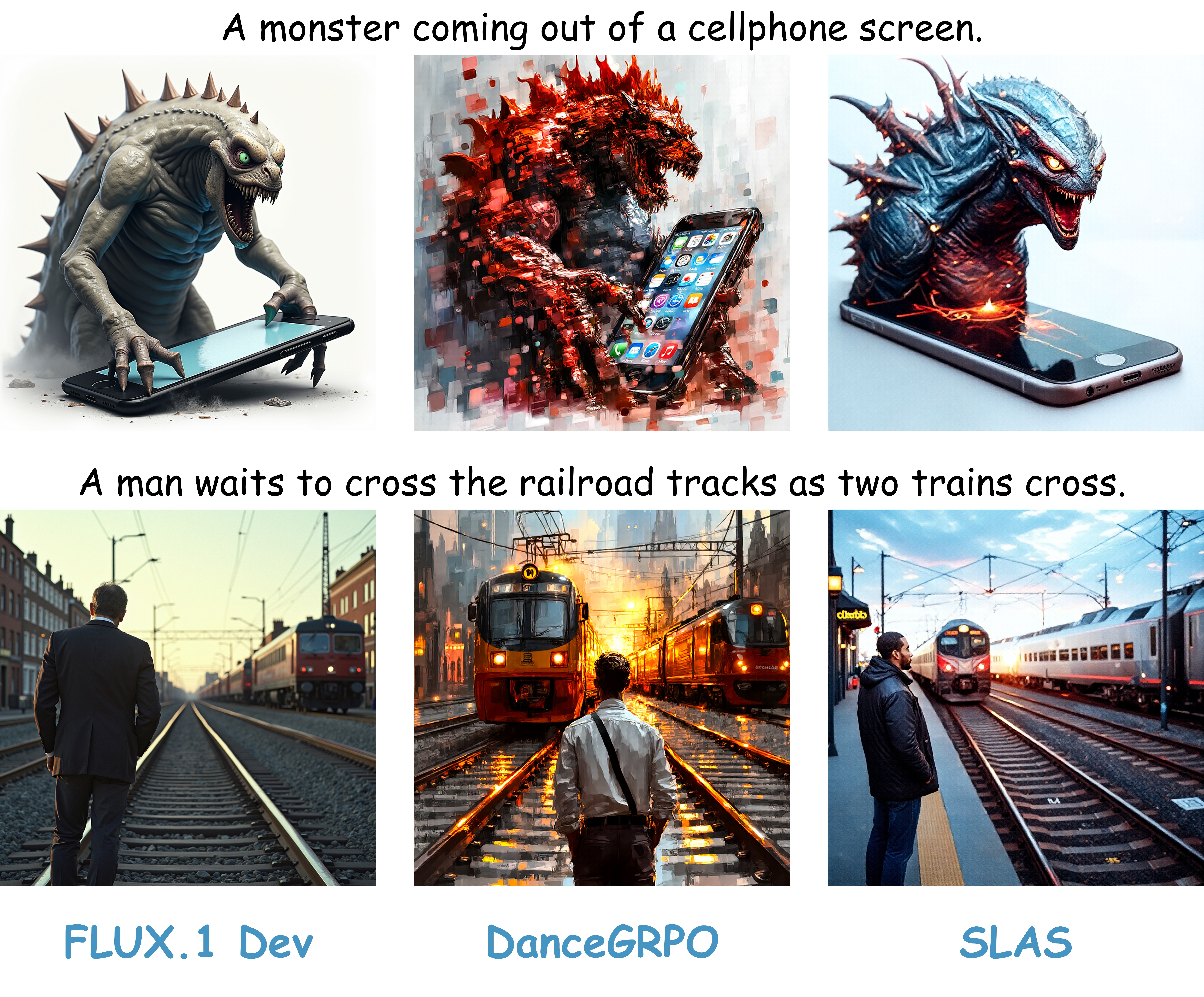}
\vspace{-2em}
\caption{Qualitative comparison of generations from FLUX.1 Dev, DanceGRPO, and SLAS.}
\vspace{-1em}
\label{visual_main}
\end{figure*}

\subsection{Quantitative Comparison}
\label{Quantitative Comparison}
\paragraph{Training Dynamics.}  As shown in \Cref{Reward_Plot}, SLAS demonstrates superior in-domain performance compared to DanceGRPO on both HPS-v2.1 and CLIPScore. In the left panel, we present the HPS-v2.1 (human preference) reward. After a comparable start, SLAS rapidly surpasses DanceGRPO and steadily improves to a higher final score, whereas DanceGRPO exhibits slower progress followed by a late-stage increase. In the right panel, we present the CLIPScore (instruction following) reward. DanceGRPO fails to achieve meaningful gains throughout optimization, whereas SLAS maintains a robust and sustained upward trajectory.

\paragraph{GenEval Benchmark.}  \Cref{GenEval_results} demonstrates the GenEval benchmark results of baseline and our method. On SD1.4, SLAS achieves the best overall performance, lifting the score from 0.42 to 0.53, surpassing the 0.51 achieved by DanceGRPO. Gains are most pronounced on compositional categories such as Two-Object, Colors, and Color Attribution, showing stronger sensitivity to structured and attribute-level semantics rather than uniform score inflation. Meanwhile, simple attribute performance is preserved, with no degradation to basic alignment. On the stronger FLUX.1 Dev backbone, SLAS remains robust, yielding a consistent improvement from 0.66 to 0.67, whereas vanilla DanceGRPO exhibits severe degradation, dropping to 0.54. This contrast highlights the risk of miscalibrated advantage scaling in standard GRPO, leading to reward hacking and poor generalization on stronger models. SLAS, however, maintains robust performance improvement across model scales.

\paragraph{UniGenBench++ Benchmark.} \Cref{UniGenBench++ BenchmarK} presents the UniGenBench++ benchmark results of baseline and our method. Across both settings, SLAS consistently outperforms DanceGRPO. In the short prompt setting, SLAS achieves an overall score of 62.60, surpassing DanceGRPO’s 60.64, while delivering substantial gains in logical reasoning, action understanding, and relation modeling. In the more challenging long prompt setting, SLAS achieves an overall score of 68.32, surpassing DanceGRPO’s 65.72, and consistently outperforms it across all sub-categories. Additionally, it is worth noting that in the long prompt setting, both DanceGRPO and SLAS achieve lower overall scores compared to FLUX.1 Dev. This gap is likely attributable to a bias toward short prompts in the HPD-v2 training data, further highlighting that a balanced prompt-length distribution in training data could be crucial for industrial-scale model deployment.

\subsection{Qualitative Comparison}
\label{Qualitative Comparison}
In \Cref{visual_main}, we present a qualitative comparison between SLAS and DanceGRPO. Overall, SLAS exhibits stronger semantic coherence, spatial consistency, and compositional reasoning abilities. For the prompt ``A monster coming out of a cellphone screen'', FLUX.1 Dev fails to capture the core semantics of ``emerging from the screen''while DanceGRPO introduces disruptive pixelated over-stylization; in contrast, SLAS accurately depicts the creature breaking through the display with natural details. For the more complex compositional prompt ``A man waits to cross the railroad tracks as two trains cross'', FLUX.1 Dev omits the critical constraint of ``two trains'', and DanceGRPO misinterprets the trains' spatial relationship with over-exaggerated contrast; in comparison, SLAS faithfully captures all semantic elements and preserves a logically consistent scene layout. Meanwhile, SLAS tends to prioritize clear and high-margin improvements over marginal gains, thereby alleviating the high-frequency artifacts, alleviating the high-frequency artifacts (e.g., over-stylization and exaggerated contrast) observed in DanceGRPO. This further mitigates semantic drift under complex prompts and leads to better alignment with human preference as well as more faithful instruction following. We provide additional qualitative examples in Appendix~\ref{Additional Qualitative Examples}. Furthermore, we also apply the method to product-level e-commerce image editing, with qualitative examples in Appendix~\ref{Product-level E-commerce Image Editing Application}.

\section{Conclusion, Limitation and Future Work}
\label{Conclusion}
In this paper, we first analyze and demonstrate that prompt-level normalization in GRPO-style advantage computation induces miscalibration, thereby facilitating reward hacking in reinforcement post-training of text-to-image models. Furthermore, we consider directly removing the prompt-level standard deviation term; however, this yields an optimal policy ascent direction that is linear in advantage. By introducing an advantage-weighted Fisher–Rao geometry that nonlinearly reshapes the policy space, we propose Super-Linear Advantage Shaping (SLAS). This mechanism amplifies informative directions while suppressing noisy updates, thereby enabling more effective policy optimization.
Extensive evaluations demonstrate that SLAS significantly surpasses the DanceGRPO baseline, exhibiting superior out-of-domain performance and robust cross-scenario generalization. \paragraph{Limitation and Future Work.} In this work, we primarily focus on the algorithmic perspective, leaving the other two dimensions (data and reward) for future research. First, from a data standpoint, our study is restricted to the public HPD-v2 dataset, and the question of ``\textit{what types of prompts are most suitable for reinforcement learning}'' remains an open problem. Second, regarding the reward mechanism, our study adopts a blended metric of HPS-v2.1 and CLIPScore, and the challenge of ``\textit{how to optimally select or formulate the most effective reward function}'' warrants further exploration.

\clearpage
\bibliographystyle{unsrtnat}  
\bibliography{main}

\newpage
\appendix
\setcounter{theorem}{1}
\setcounter{proposition}{0}
\section{Illustrative Examples of Low-Variance Property}
\label{Illustrative Examples of Low-Variance Property}
\begin{figure}[h]
    \centering
    \includegraphics[width=1.0\linewidth]{toy-example.jpg}
    \caption{Illustrative Examples: Different Generations under the Same Prompt. \textcolor[HTML]{4594c1}{Blue} represents the \textcolor[HTML]{4594c1}{Prompt}. \textcolor[RGB]{244,177,131}{Orange} represents the \textcolor[RGB]{244,177,131}{CLIPScore}. \textcolor[RGB]{169,209,142}{Green} represents the \textcolor[RGB]{169,209,142}{HPS-v2.1}. Zoom in for more details.}
    \label{illustrative examples}
\end{figure}
The most widely adopted reward models for text-to-image tasks, including CLIP \citep{radford2021learning} and HPS-v2.1 \citep{wu2023human}, share a critical property: the reward variance for samples generated from the same prompt is extremely small. These pre-trained reward models tend to assign highly concentrated scores to semantically similar images, and fail to distinguish subtle differences, which results in an extremely compact reward distribution across the group of samples. We present this property with examples in \Cref{illustrative examples}. Combined with our theoretical conclusion, the normalized advantage assigns a weight of $1/(\sigma_j+\epsilon)$ to each reward bin. Although $\sigma_j$ is defined theoretically, the above examples indicate that it tends to be small, which can result in extreme advantage amplification. This over-amplification directly causes the model to devote the majority update effort to optimizing spurious tiny differences: the model will continuously adjust generation outputs to squeeze out every minor improvement in the reward model's scores. This eventually evolves into the typical reward hacking behavior: the proxy score of the reward model keeps increasing, while the real image quality deteriorate continuously.
\section{Proof of Proposition 1}
\label{Proof of Theorem 1}
Prior to the formal proof of Proposition 1, we first establish the validity of the assumption that ``the group number $G$ is large enough'' in \Cref{lemma_assumption_1} and \Cref{lemma_assumption_2}.
\begin{lemma}
\label{lemma_assumption_1}
Consider a group of G samples, and define the sample standard deviation as $s_G=\sqrt{\frac{1}{G-1}\sum_{i=1}^G(r_i-\bar r)^2}$, where $\bar r$ denotes the sample mean. Assuming the reward distribution has a finite fourth-order moment $\mu_4$, then the deviation between $s_G$ and $\sigma$ satisfies that:
\begin{equation}
\mathbb{E}[s_G]-\sigma=-\frac{1}{8\sigma^3}\frac{\mu_4}{G}+\frac{G-3}{8G(G-1)}\sigma+o_p\left(\frac{1}{G}\right)
\end{equation}
\end{lemma}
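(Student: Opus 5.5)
The plan is a second-order delta-method expansion of the square root around $\sigma^2$, applied to the unbiased sample variance $S_G^2 := s_G^2$. Write $h := S_G^2-\sigma^2$, so $\mathbb{E}[h]=0$. Two standard facts about $S_G^2$, valid whenever $\mu_4<\infty$, will be taken as the starting point. The first is $\mathbb{E}[S_G^2]=\sigma^2$. The second is the exact variance formula
\begin{equation*}
\mathrm{Var}(S_G^2)=\frac{\mu_4}{G}-\frac{(G-3)\,\sigma^4}{G(G-1)}.
\end{equation*}
This formula follows by expanding $\sum_i (r_i-\bar r)^2$ as a U-statistic, or by a direct moment computation. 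I would cite it rather than rederive it.

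Next, Taylor-expand $\sqrt{x}$ at $x=\sigma^2$ to second order:
\begin{equation*}
s_G=\sqrt{\sigma^2+h}=\sigma+\frac{h}{2\sigma}-\frac{h^2}{8\sigma^3}+R(h).
\end{equation*}
Taking expectations, the linear term vanishes and $\mathbb{E}[h^2]=\mathrm{Var}(S_G^2)$. This gives
\begin{equation*}
\mathbb{E}[s_G]-\sigma=-\frac{1}{8\sigma^3}\left(\frac{\mu_4}{G}-\frac{(G-3)\sigma^4}{G(G-1)}\right)+\mathbb{E}[R(h)].
\end{equation*}
After simplification, the bracketed term is exactly $-\frac{\mu_4}{8\sigma^3 G}+\frac{(G-3)\sigma}{8G(G-1)}$, which is the displayed expansion. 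It therefore remains only to show that $\mathbb{E}[R(h)]=o(1/G)$. I read the statement's $o_p$ as a deterministic $o$, since the left-hand side is a number.

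The remainder step is the main obstacle, because only four moments are assumed and $h$ may be as small as $-\sigma^2$, where $\sqrt{\cdot}$ is not smooth. I would use two elementary bounds on $R$, valid for all $h\ge-\sigma^2$. The first is the global bound $|R(h)|\le C\,h^2/\sigma^3$, which follows because each of $\sqrt{\sigma^2+h}-\sigma-h/(2\sigma)$ and $h^2/(8\sigma^3)$ is $O(h^2/\sigma^3)$ by concavity. The second is the local bound $|R(h)|\le C|h|^3/\sigma^5$ for $|h|\le\sigma^2/2$.

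Fix $\varepsilon\in(0,1/2)$ and split the expectation on the event $\{|h|\le\varepsilon\sigma^2\}$ and its complement:
\begin{equation*}
|\mathbb{E}[R(h)]|\le C\varepsilon\,\frac{\mathbb{E}[h^2]}{\sigma^3}+\frac{C}{\sigma^3}\,\mathbb{E}\!\left[h^2\mathbf 1\{|h|>\varepsilon\sigma^2\}\right].
\end{equation*}
The first term is $\varepsilon\cdot O(1/G)$.

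For the second term, I would show that $\{G h^2\}_{G}$ is uniformly integrable. By the CLT for $S_G^2$, $\sqrt G\,h\Rightarrow\mathcal N(0,\mu_4-\sigma^4)$. Moreover $\mathbb{E}[Gh^2]\to\mu_4-\sigma^4$, which equals the second moment of the limit. Convergence in distribution together with convergence of means of the nonnegative variables $Gh^2$ yields uniform integrability. Since $\mathbb P(|h|>\varepsilon\sigma^2)\to0$, it follows that $G\,\mathbb{E}[h^2\mathbf 1\{|h|>\varepsilon\sigma^2\}]\to0$. Hence $\limsup_G G\,|\mathbb{E}[R(h)]|\le C'\varepsilon$ for every $\varepsilon$, so $\mathbb{E}[R(h)]=o(1/G)$, which completes the argument.
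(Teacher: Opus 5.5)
Your proposal is correct and takes the same route as the paper: a second-order Taylor expansion of $\sqrt{x}$ at $\sigma^2$, combined with $\mathbb{E}[s_G^2]=\sigma^2$ and the exact formula $\mathrm{Var}(s_G^2)=\bigl(\mu_4-\frac{G-3}{G-1}\sigma^4\bigr)/G$. The one difference is the remainder: the paper writes a Lagrange term $\frac{1}{6}f'''(\xi)(s_G^2-\sigma^2)^3$ and simply declares it $o_p(1/G)$, whereas your truncation-plus-uniform-integrability argument actually shows the expected remainder is $o(1/G)$ under only the fourth-moment hypothesis, where $f'''(\xi)$ is unbounded near $0$ and $\mathbb{E}|s_G^2-\sigma^2|^3$ need not be finite.
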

\begin{proof}
By the Central Limit Theorem, we can obtain that: $\sqrt{G}(s_G^2-\sigma^2)\rightarrow N(0, \mu_4-\sigma^4)$. 

Perform a Taylor expansion of the function $f(x)=\sqrt{x}$ around $x=\sigma^2$: 
\begin{equation}
f(s_G^2)=f(\sigma^2)+f'(\sigma^2)(s_G^2-\sigma^2)+\frac{1}{2}f''(\sigma^2)(s_G^2-\sigma^2)^2 + \frac{1}{6}f'''(\xi)(s_G^2-\sigma^2)^3
\end{equation}
Furthermore, notice that:
\begin{equation}
\mathbb{E}[(s_G^2-\sigma^2)^2]=\text{Var}(s_G^2)=\frac{\mu_4-\frac{G-3}{G-1}\sigma^4}{G} \quad \text{and} \quad \mathbb{E}[(s_G^2-\sigma^2)]=0.
\end{equation}
Then, substitute the derivatives and take the expectation on both sides. Finally, we can obtain that: 
\begin{equation}
\mathbb{E}[s_G]=\sigma+\frac{1}{2\sigma}\mathbb{E}[s_G^2-\sigma^2]-\frac{1}{8\sigma^3}\mathbb{E}[(s_G^2-\sigma^2)^2]+o_p(\frac{1}{G})=\sigma-\frac{1}{8\sigma^3}\frac{\mu_4}{G}+\frac{G-3}{8G(G-1)}\sigma+o_p(\frac{1}{G}),
\end{equation}
which completes the proof.
\end{proof}
\begin{lemma}
\label{lemma_assumption_2}
If reward follows the Gaussian distribution, we have that: 
\begin{equation}
\mathbb{E}[s_G]=\sigma \cdot \sqrt{\frac{2}{G-1}} \cdot \frac{\Gamma(\frac{G}{2})}{\Gamma(\frac{G-1}{2})}.
\end{equation}
Then, the relative bias ratio, defined as $R_{\text{RelBias}}=(\mathbb{E}[s_G]-\sigma)/\sigma$ equals $-0.035$ for $G=8$ and $-0.017$ for $G=16$.
\end{lemma}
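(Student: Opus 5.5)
The plan is to use the classical fact that, for Gaussian rewards, $(G-1)s_G^2/\sigma^2$ has a chi-square distribution with $G-1$ degrees of freedom (Cochran's theorem). Then $s_G = \sigma\,\sqrt{Q/(G-1)}$ with $Q\sim\chi^2_{G-1}$, so it suffices to compute $\mathbb{E}[\sqrt{Q}]$. That is, we need the half-order moment of a chi-square variable, which is equivalently the mean of a chi distribution with $k=G-1$ degrees of freedom.

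To compute it, I will integrate directly against the density
\[
f(q)=\frac{q^{k/2-1}e^{-q/2}}{2^{k/2}\Gamma(k/2)}.
\]
Multiplying by $q^{1/2}$ raises the exponent to $(k+1)/2-1$. The integral is then a Gamma integral:
\[
\int_0^\infty q^{(k+1)/2-1}e^{-q/2}\,dq = 2^{(k+1)/2}\,\Gamma\!\left(\tfrac{k+1}{2}\right).
\]
Hence
\[
\mathbb{E}[\sqrt{Q}] = \sqrt{2}\,\frac{\Gamma\!\left(\tfrac{k+1}{2}\right)}{\Gamma\!\left(\tfrac{k}{2}\right)}.
\]
Substituting $k=G-1$ and multiplying by $\sigma/\sqrt{G-1}$ gives exactly
\[
\mathbb{E}[s_G]=\sigma\sqrt{\tfrac{2}{G-1}}\,\frac{\Gamma(G/2)}{\Gamma((G-1)/2)}.
\]
This is the standard $c_4(G)$ bias-correction constant.

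For the numerical claims, I will evaluate $c_4(G)-1$. At $G=8$ we need $\Gamma(4)=6$ and $\Gamma(7/2)=\tfrac{15}{8}\sqrt{\pi}\approx 3.3234$. This gives
\[
c_4(8)=\sqrt{2/7}\cdot 6/3.3234\approx 0.5345\cdot 1.8054\approx 0.9650,
\]
so the relative bias is $\approx -0.035$. At $G=16$ we need $\Gamma(8)=5040$ and $\Gamma(15/2)=\tfrac{135135}{128}\sqrt{\pi}\approx 1871.25$. This gives $c_4(16)\approx\sqrt{2/15}\cdot 2.6934\approx 0.9835$, so the relative bias is $\approx -0.017$. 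As a cross-check, I will also compare against the asymptotic expansion $c_4(G)\approx 1-\tfrac{1}{4G}$, which is consistent with \Cref{lemma_assumption_1} once $\mu_4=3\sigma^4$. This confirms that the bias is small for practical group sizes.

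The only genuinely nontrivial ingredient is the chi-square distribution of $(G-1)s_G^2/\sigma^2$. This requires the i.i.d. Gaussian assumption, which I will state explicitly, together with the independence of the sample mean and the sample variance. I will cite this as a standard result rather than reprove it. Beyond that, the hardest step is simply getting the arithmetic of the half-integer Gamma values right, and I will check those values carefully.
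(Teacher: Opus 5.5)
Your proposal is correct and follows the same route as the paper: the $\chi^2(G-1)$ law of $(G-1)s_G^2/\sigma^2$, the half-moment $\mathbb{E}[\sqrt{Q}]=\sqrt{2}\,\Gamma(\tfrac{k+1}{2})/\Gamma(\tfrac{k}{2})$ with $k=G-1$, and direct evaluation at $G=8$ and $G=16$. Where you differ, you go slightly further than the paper: you derive the half-moment from the chi-square density (the paper only cites it as well known), and you cross-check against $1-\tfrac{1}{4G}$ (Lemma~\ref{lemma_assumption_1} with $\mu_4=3\sigma^4$); both steps check out, as do your Gamma values and the rounding $-0.0165\to-0.017$.
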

\begin{proof}
The standardized sample variance follows a chi-squared distribution:
\begin{equation}
\frac{(G-1)s_G^2}{\sigma^2}\sim\chi^2(G-1)
\end{equation}
For a chi-squared distribution, it is well known that the following holds:
\begin{equation}
\mathbb{E}[\sqrt{X}]=\sqrt{2}\frac{\Gamma(\frac{k+1}{2})}{\Gamma(\frac{k}{2})}.
\end{equation} 
By substituting $k=G-1$, we can obtain that:
\begin{equation}
\mathbb{E}[s_G]=\frac{\sigma}{\sqrt{G-1}} \cdot \mathbb{E}[\sqrt{\frac{(G-1)s_G^2}{\sigma^2}}]=\sqrt{2}\cdot\frac{\sigma}{\sqrt{G-1}}\cdot\frac{\Gamma(\frac{G}{2})}{\Gamma(\frac{G-1}{2})}.
\end{equation}
When $G=8$, we have that:
\begin{equation}
R_{\text{RelBias}}(8)=\frac{\mathbb{E}[s_G]-\sigma}{\sigma}=\sqrt{\frac{2}{7}}\frac{\Gamma(4)}{\Gamma(3.5)}-1\approx 0.965-1=-0.035.
\end{equation}
When $G=16$, we have that:
\begin{equation}
R_{\text{RelBias}}(16)=\frac{\mathbb{E}[s_G]-\sigma}{\sigma}=\sqrt{\frac{2}{15}}\frac{\Gamma(8)}{\Gamma(7.5)}-1\approx 0.983-1=-0.017,
\end{equation}
which completes the proof.
\end{proof}
Therefore, the assumption that ``the group number $G$ is large enough'' is justified in practice. With this established, we now proceed to prove Proposition 1.
\begin{proposition}
Suppose the group number $G$ is large enough, and we set:
\[
\sigma_k = \mathbb{E}_{(\hat{z_0^1}, \hat{z_0^2},\cdots,\hat{z_0^G})\sim p_{\theta_{\text{old}}}}\\\Big[\text{std}\Big(r(c,\hat{z_0^1}; \alpha, p_k)_1, \\ r(c,\hat{z_0^2};\alpha, p_k)_2, \cdots, r(c,\hat{z_0^G}; \alpha, p_k)_G\Big)\Big]
\]
denote the expected reward standard deviation under the $k$-th discrete reward histogram. Then the expected standardized GRPO advantage satisfies:
\begin{equation}
\mathbb{E}_{\alpha\sim \boldsymbol{P}, (\hat{z_0^2},\cdots,\hat{z_0^G})\sim p_{\theta_{\text{old}}}(\cdot|c)}\Big[\hat{A}_{\text{std}}^{p_{\theta_{\text{old}}}}(c, T, \hat{z_0^1})\Big]
= \frac{G-1}{G} \cdot \sum \limits_{j = 1}^{n}\frac{1}{\sigma_j + \epsilon}\cdot p_j \cdot (r(c,z_0;\alpha,p_j)-\eta_j).
\end{equation}
\end{proposition}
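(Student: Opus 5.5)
The plan follows the argument of \citet{bereket2025uncalibrated}: condition on the reward histogram bin, replace the random group standard deviation by its expectation using the large-$G$ assumption, and then compute the expectation of the mean-centred reward directly. Throughout, write $r_i := r(c,\hat{z_0^i};\alpha,p_k)$ for the reward of the $i$-th sample when the true reward signal $\alpha$ falls in the $k$-th bin.

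\textbf{Step 1: conditioning on the bin.} We first apply the tower property over $\alpha\sim\boldsymbol{P}$:
\[
\mathbb{E}\big[\hat{A}_{\text{std}}\big]=\sum_{j=1}^n p_j\,\mathbb{E}\big[\hat{A}_{\text{std}}\mid \alpha\in\text{bin } j\big].
\]
The outer sum with weights $p_j$ in the claimed formula then comes for free, and the remaining task is to evaluate the conditional expectation inside each bin $j$.

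\textbf{Step 2: decoupling the standard deviation.} Inside bin $j$ the standardized advantage of the first sample is
\[
\hat{A}_{\text{std}}=\frac{r_1-\bar r}{\text{std}(r_1,\dots,r_G)+\epsilon}.
\]
We invoke the assumption that $G$ is large. By \Cref{lemma_assumption_1}, the sample standard deviation concentrates around its mean, with bias of order $O(1/G)$; \Cref{lemma_assumption_2} shows this bias is already negligible at practical values such as $G=8$ or $G=16$. We therefore replace the denominator by its expectation $\sigma_j+\epsilon$, treating it as a constant. This lets us pull it out of the expectation:
\[
\mathbb{E}\big[\hat{A}_{\text{std}}\mid j\big]\approx\frac{1}{\sigma_j+\epsilon}\,\mathbb{E}\big[r_1-\bar r\mid j\big].
\]

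\textbf{Step 3: the centred numerator.} Write
\[
r_1-\bar r=\frac{G-1}{G}\Big(r_1-\frac{1}{G-1}\sum_{i\ge2}r_i\Big).
\]
The variable $\hat{z_0^1}$ is fixed, so $r_1$ is the deterministic quantity $r(c,z_0;\alpha,p_j)$. The samples $\hat{z_0^2},\dots,\hat{z_0^G}$ are i.i.d.\ from $p_{\theta_{\text{old}}}$, and by the definition of $\boldsymbol{\eta}$ the leave-one-out average has expectation $\eta_j$. This gives
\[
\mathbb{E}\big[r_1-\bar r\mid j\big]=\frac{G-1}{G}\big(r(c,z_0;\alpha,p_j)-\eta_j\big).
\]
Summing over $j$ with weights $p_j$ then yields the stated identity.

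\textbf{Main obstacle.} The main difficulty is Step 2. The numerator and denominator are dependent random variables, so $\mathbb{E}[X/Y]\neq\mathbb{E}[X]/\mathbb{E}[Y]$ in general. I would handle this with a first-order delta-method expansion of $Y$ around $\sigma_j$, using the concentration of $s_G$ from \Cref{lemma_assumption_1}. The cross term is $\mathrm{Cov}(X,Y)/\sigma_j^2$ and is of order $O(1/G)$, so it vanishes under the ``large $G$'' hypothesis. The equality should therefore be read as holding up to $o(1)$ terms in $G$.
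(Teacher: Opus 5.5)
Your proposal is correct and follows essentially the same route as the paper. Both compute the mean-centred numerator as $\frac{G-1}{G}\big(r(c,z_0;\alpha,p_j)-\eta_j\big)$, replace the group standard deviation by its per-bin expectation $\sigma_j$ under the large-$G$ assumption, and sum over bins with weights $p_j$. Your explicit conditioning on the bin and your delta-method remark on the $O(1/G)$ covariance term are cleaner than the paper's proof, which makes the same substitution silently via ``$\approx$''; like you, the paper therefore only establishes the identity up to lower-order terms in $G$.
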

\begin{proof}
According to the definition of advantage function, we further have the true advantage estimation as follows:
\begin{equation}
\begin{split}
A^{p_{\theta_{\text{old}}}}(c,t,\hat{z_t},\hat{z}_{t-1})  = & Q^{p_{\theta_{\text{old}}}}(c,t,\hat{z_t},\hat{z}_{t-1}) - V^{p_{\theta_{\text{old}}}}(c,t,\hat{z_t})\\
=&\mathbb{E}_{\alpha\sim \boldsymbol{P}}[r(c,\hat{z_t};\alpha)] - \mathbb{E}_{\alpha\sim \boldsymbol{P}, \hat{z_t}'}[r(c,\hat{z_t}';\alpha)].
\end{split}
\end{equation}
Hence, we can derive that:
\begin{equation}
\begin{split}
A^{p_{\theta_{\text{old}}}}(c, \boldsymbol{\hat{P}}) = & \sum \limits_{t = 1}^{T}A^{p_{\theta_{\text{old}}}}(c,t,\hat{z_t},\hat{z}_{t-1})  \\ = & \mathbb{E}_{t\sim\mathcal{U}(1,T)}\bigg[ \mathbb{E}_{\alpha\sim \boldsymbol{P}}\Big(r(c, \hat{z_t}; \alpha) -\mathbb{E}_{z'_t}r(c,\hat{z_t}'; \alpha)\Big)\bigg].
\end{split}
\end{equation}

Define the advantage estimator that \underline{excludes standard deviation term} for GRPO sample estimation as $\hat{A}_{\text{w/o-std}}^{p_{\theta_{\text{old}}}}(c, T, \hat{z_0})$. Then, consider the advantage for the first sample $\hat{z_0^1}$:
\begin{equation}
\begin{split}
&\mathbb{E}_{\alpha\sim \boldsymbol{P}, (\hat{z_0^2},\cdots,\hat{z_0^G})\sim p_{\theta_{\text{old}}}(\cdot|c)}\Big[\hat{A}_{\text{w/o-std}}^{p_{\theta_{\text{old}}}}(c, T, \hat{z_0^1})\Big]\\
=&\mathbb{E}_{\alpha\sim \boldsymbol{P}, (\hat{z_0^2},\cdots,\hat{z_0^G})\sim p_{\theta_{\text{old}}}(\cdot|c)}\Big[r(c,\hat{z_0^1}; \alpha)_1- \frac{1}{G}\cdot\sum\limits_{m = 1}^{G}r(c,\hat{z_0^m}; \alpha)_m\Big]\\
=&\mathbb{E}_{\alpha\sim \boldsymbol{P}} \Big[\frac{G-1}{G}r(c,\hat{z_0^1}; \alpha)_1 - \frac{G-1}{G}\mathbb{E}_{\hat{z_0^{\text{any}}}}r(c,\hat{z_0^{\text{any}}};\alpha)_{\text{any}}\Big]\\
=&\frac{G-1}{G}\cdot\mathbb{E}_{\alpha\sim \boldsymbol{P}} \Big[r(c,\hat{z_0^1}; \alpha)_1 - \mathbb{E}_{\hat{z_0^{\text{any}}}}r(c,\hat{z_0^{\text{any}}};\alpha)_{\text{any}}\Big]\\
=&\frac{G-1}{G} \cdot A^{p_{\theta_{\text{old}}}}(c, T, \hat{z_0}) = \frac{G-1}{G} A^{p_{\theta_{\text{old}}}}(c, \boldsymbol{\hat{P}}),
\end{split}
\end{equation}
where $\hat{z_0^{\text{any}}}$ refer to any generated output other than $z_0^1$, with $r(c,\hat{z_0^{\text{any}}};\alpha)_{\text{any}}$ indicating the associated reward.
For sufficiently large $G$, the sample-based advantage estimator provides an accurate approximation of the true advantage.

Moreover, let's consider \underline{the expected advantage with standard normalization}. 
As the group number $G$ is large enough, we have can make the following approximation:
\begin{equation}
\mathbb{E}_{(\hat{z_0^2},\cdots,\hat{z_0^G})\sim p_{\theta_{\text{old}}}}\Big[\text{std}\Big(r(c,\hat{z_0^1}; \alpha, p_k)_1, r(c,\hat{z_0^2}; \alpha, p_k)_2, \cdots, r(c,\hat{z_0^G}; \alpha, p_k)_G\Big)\Big]  \approx \sigma_k.
\end{equation}
Therefore, \underline{the advantage with standard derivation} item for the GRPO sample estimation can be expressed as follows:
\begin{equation}
\begin{split}
&\mathbb{E}_{\alpha\sim \boldsymbol{P}, (\hat{z_0^2},\cdots,\hat{z_0^G})\sim p_{\theta_{\text{old}}}(\cdot|c)}\Big[\hat{A}_{\text{std}}^{p_{\theta_{\text{old}}}}(c, T, \hat{z_0^1})\Big]\\
= &\mathbb{E}_{\alpha\sim \boldsymbol{P}, (\hat{z_0^2},\cdots,\hat{z_0^G})\sim p_{\theta_{\text{old}}}(\cdot|c)} \Big[\frac{r(c,\hat{z_0^1}; \alpha)_1-\text{mean}(r(c,\hat{z_0^1}; \alpha)_1,\cdots,r(c,\hat{z_0^G}; \alpha)_G)}{\text{std}(r(c,\hat{z_0^1}; \alpha)_1,\cdots,r(c,\hat{z_0^G}; \alpha)_G)+\epsilon}\Big]\\
= &\mathbb{E}_{\alpha\sim \boldsymbol{P}} \frac{G-1}{G}\cdot \frac{ A^{p_{\theta_{\text{old}}}}(c,\boldsymbol{\hat{P}})}{\boldsymbol{\sigma} + \epsilon}\\
= &\frac{G-1}{G} \cdot \sum \limits_{j = 1}^{n}\frac{1}{\sigma_j + \epsilon}\cdot p_j \cdot (r(c,z_0;\alpha,p_j)-\eta_j),
\end{split}
\end{equation}
which completes the proof.
\end{proof}

\section{Proof of Theorem 2}
\label{Proof of Theorem 2}
Before proving \Cref{optimal ascent}, we introduce the following lemma on equivalence class induced by the policy gradient operator.
\begin{lemma}
\label{equivalent_function}
Let $\pi_{\theta}(y|x)$ be a differentiable conditional probability distribution such that:
\begin{equation}
\forall x, \; \int_{\mathcal{Y}}\pi_{\theta}(y|x) dy = 1.
\end{equation}
For any integrable function $f(x, y): \mathcal{X}\times\mathcal{Y}\rightarrow\mathbb{R}$, the gradient operator can be expressed as:
\begin{equation}
\mathcal{G}(f(x,y)):=\mathbb{E}_{\pi_{\theta}}[\nabla_{\theta}\log\pi_{\theta}(y|x)\cdot f(x, y)].
\end{equation}
Then, for any measurable function $b(x): \mathcal{X}\rightarrow\mathbb{R}$, we have that:
\begin{equation}
\mathcal{G}(f(x, y)) = \mathcal{G}(f(x, y) + b(x)).
\end{equation}
Equivalently, under the gradient operator $\mathcal{G}(\cdot)$, we have that: $f(x, y) \sim f(x, y) + b(x)$.
\end{lemma}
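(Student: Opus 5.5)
The plan is to show directly that adding a baseline $b(x)$ contributes nothing to the gradient operator $\mathcal{G}$. By linearity of expectation,
\[
\mathcal{G}(f+b)=\mathcal{G}(f)+\mathbb{E}_{\pi_\theta}[\nabla_\theta\log\pi_\theta(y|x)\,b(x)],
\]
so it suffices to prove that the last term vanishes. Since $b(x)$ does not depend on $y$, we condition on $x$ and pull $b(x)$ outside the inner integral over $\mathcal{Y}$. This reduces the claim to showing that the score function has zero conditional mean:
\[
\int_{\mathcal{Y}}\pi_\theta(y|x)\nabla_\theta\log\pi_\theta(y|x)\,dy=0 \quad\text{for every } x.
\]

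To prove this, use the log-derivative identity $\pi_\theta\nabla_\theta\log\pi_\theta=\nabla_\theta\pi_\theta$, which holds wherever $\pi_\theta>0$. Points with $\pi_\theta=0$ have zero weight in the expectation and can be ignored. The integral then becomes $\int_{\mathcal{Y}}\nabla_\theta\pi_\theta(y|x)\,dy$. Interchanging gradient and integral gives $\nabla_\theta\int_{\mathcal{Y}}\pi_\theta(y|x)\,dy=\nabla_\theta 1=0$, using the normalization hypothesis in the statement. Multiplying by $b(x)$ and taking the outer expectation over $x$ (or noting that it vanishes pointwise in $x$ when $x$ is fixed) gives $\mathcal{G}(b)=0$. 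Hence $\mathcal{G}(f)=\mathcal{G}(f+b)$, which is exactly the stated equivalence $f\sim f+b$.

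The only step that needs care is interchanging $\nabla_\theta$ with $\int_{\mathcal{Y}}$. I would justify it by dominated convergence: assume a locally uniform integrable bound on $\nabla_\theta\pi_\theta(y|x)$, which is implicit in "differentiable conditional distribution". I would also require that $b(x)\nabla_\theta\log\pi_\theta$ and $f\nabla_\theta\log\pi_\theta$ be integrable, so that the linearity split in the first step is legitimate. Everything else is routine algebra.
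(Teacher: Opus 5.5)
Your proof is correct and follows essentially the same route as the paper. The paper likewise rewrites $\pi_\theta\nabla_\theta\log\pi_\theta$ as the variation $\delta\pi$, pulls $b(x)$ out of the $y$-integral, and concludes $\mathcal{G}(b)=b(x)\int_{\mathcal{Y}}\delta\pi(y|x)\,dy=0$ from the normalization (admissible-variation) constraint. Your dominated-convergence and integrability conditions make explicit the interchange of $\nabla_\theta$ and $\int_{\mathcal{Y}}$ that the paper takes for granted. One small point: to pass from the integral over $\{\pi_\theta>0\}$ to the full $\int_{\mathcal{Y}}\nabla_\theta\pi_\theta\,dy$, note that $\nabla_\theta\pi_\theta=0$ on $\{\pi_\theta=0\}$, since those points are minima of a nonnegative differentiable function, so dropping that set is harmless.
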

\begin{proof}
According to the definition, we have that:
\begin{equation}
\begin{split}
\mathcal{G}(f(x,y))=&\int_{\mathcal{Y}}f(x, y)\cdot\delta\pi(y|x) dy = \int_{\mathcal{Y}}f(x, y)\cdot\pi(y|x)\cdot\delta\log\pi(y|x) dy \\=& \mathbb{E}_{\pi_{\theta}(y|x)}[f(x, y)\nabla_{\theta}\log\pi_{\theta}(y|x)].
\end{split}
\end{equation}
Furthermore, for the function $b(x)$, which does not depend on $y$, we have that:
\begin{equation}
\begin{split}
\mathcal{G}(b(x)) =& \mathbb{E}_{\pi_{\theta}(y|x)}[b(x)\nabla_{\theta}\log\pi_{\theta}(y|x)] = \int_{\mathcal{Y}}b(x)\cdot\pi(y|x)\cdot\delta\log\pi(y|x) dy\\ =& \int_{\mathcal{Y}}b(x)\cdot\delta\pi(y|x) dy
= b(x)\cdot\int_{\mathcal{Y}}\delta\pi(y|x) dy =0
\end{split}
\end{equation}
Hence, we readily obtain that $\mathcal{G}(f(x, y)) = \mathcal{G}(f(x, y) + b(x))$, which completes the proof.
\end{proof}
\begin{theorem}
Considering all perturbations $\delta\pi(y|x)$ that satisfy the constraint in \Cref{constrain}, and noting that the functional $\boldsymbol{\mathcal{F}}(\pi(y|x))$ is defined on the measure space, we assume that the policy $\pi(y|x)$ is absolutely continuous. Then, the optimal ascent is characterized by the following result:
\begin{equation}
\delta\pi(y|x)\propto\pi_0(y|x) \cdot A(x, y),
\end{equation}
where $\pi_0(y|x)$ denotes the current policy function, $A(x, y) = r(x,y) + \lambda(x)$ represents the advantage function class.
\end{theorem}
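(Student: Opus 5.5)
We solve the local update problem in \Cref{functional local update problem} pointwise in $x$, treating it as a concave quadratic maximization over admissible variations subject to the linear constraint in \Cref{constrain}.

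\textbf{Step 1: compute the first variation of the objective.} Since $\boldsymbol{\mathcal{F}}(\pi)=\int_{\mathcal{Y}} r(x,y)\,\pi(y|x)\,dy$ is linear in $\pi$, its Gateaux derivative in the direction $\delta\pi$ is exact:
\begin{equation*}
\delta\boldsymbol{\mathcal{F}}(\pi_0;\delta\pi)=\int_{\mathcal{Y}} r(x,y)\,\delta\pi(y|x)\,dy .
\end{equation*}
Absolute continuity of $\pi$ lets us identify $\delta\pi$ with a density. We may also write $\delta\pi=\pi_0\,\delta\log\pi$, as in the proof of \Cref{equivalent_function}. This is what makes the weighted quadratic term $\int (\delta\pi)^2/\pi_0$ well defined on the support of $\pi_0$.

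\textbf{Step 2: form the Lagrangian.} For each $x$ we introduce a multiplier $\lambda(x)$ for the normalization constraint:
\begin{equation*}
\mathcal{L}(\delta\pi,\lambda)=\int_{\mathcal{Y}} r\,\delta\pi\,dy-\frac{1}{2\eta}\int_{\mathcal{Y}}\frac{(\delta\pi)^2}{\pi_0}\,dy+\lambda(x)\int_{\mathcal{Y}}\delta\pi\,dy .
\end{equation*}

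\textbf{Step 3: solve the stationarity condition.} Taking the pointwise variation in $\delta\pi(y|x)$ gives
\begin{equation*}
r(x,y)-\frac{1}{\eta}\frac{\delta\pi(y|x)}{\pi_0(y|x)}+\lambda(x)=0,
\end{equation*}
so that
\begin{equation*}
\delta\pi(y|x)=\eta\,\pi_0(y|x)\bigl(r(x,y)+\lambda(x)\bigr)=\eta\,\pi_0(y|x)\,A(x,y).
\end{equation*}
The objective is strictly concave, because the quadratic term is a negative definite form in the Fisher--Rao metric. Hence this stationary point is the unique maximizer on the affine subspace of admissible variations.

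\textbf{Step 4: pin down the multiplier and the advantage class.} Imposing $\int\delta\pi\,dy=0$ forces $\lambda(x)=-\mathbb{E}_{\pi_0}[r(x,y)]$, i.e.\ the negative value baseline. Any other choice of $b(x)$ yields the same policy gradient by \Cref{equivalent_function}, which justifies describing $A(x,y)=r(x,y)+\lambda(x)$ as an equivalence class of advantage functions.

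\textbf{Main obstacle.} The delicate point is rigor in the infinite-dimensional setting. Pointwise differentiation under the integral requires $\delta\pi/\pi_0\in L^2(\pi_0)$, and we need $\pi_0>0$ on the support of any admissible variation. We would handle this by restricting to variations absolutely continuous with respect to $\pi_0$ and working in the Hilbert space $L^2(\pi_0)$. There, the problem is maximizing $\langle r,h\rangle_{\pi_0}-\frac{1}{2\eta}\|h\|^2$ over $h=\delta\pi/\pi_0$ subject to $\langle 1,h\rangle_{\pi_0}=0$. The solution is the orthogonal projection of $\eta r$ onto $\{1\}^\perp$, namely $h=\eta\,(r-\mathbb{E}_{\pi_0}r)$, which confirms the result without requiring formal pointwise calculus.
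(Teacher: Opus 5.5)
Your proposal is correct and follows essentially the same route as the paper: a Lagrangian with a per-$x$ multiplier $\lambda(x)$ for the normalization constraint, a stationarity condition yielding $\delta\pi(y|x)=\eta\,\pi_0(y|x)\bigl(r(x,y)+\lambda(x)\bigr)$, and an appeal to \Cref{equivalent_function} to read $r+\lambda(x)$ as the advantage class. Your additions (strict concavity for uniqueness, solving the constraint to get $\lambda(x)=-\mathbb{E}_{\pi_0}[r(x,y)]$, and the $L^2(\pi_0)$ projection argument) make rigorous what the paper leaves formal; the paper additionally re-derives the quadratic term by Taylor-expanding the KL divergence, which you reasonably take as given from the main text.
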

\begin{proof}
Firstly, we reformulate \Cref{constrained functional optimization problem} as follows:
\begin{equation}
\label{optimal problem}
\max\limits_{\pi(y|x)}\{ \boldsymbol{\mathcal{F}}(\pi(y|x))-\frac{1}{\eta}\cdot\mathbb{D}_{\text{KL}}(\pi(y|x)||\pi_0(y|x)) \}.
\end{equation}
Under these settings, we consider a single local update $\pi(y|x) = \pi_0(y|x)+\delta\pi(y|x)$, where the value of $||\delta\pi(y|x)||$ is very small.

Firstly, we perform the calculation for $\mathbb{D}_{\text{KL}}(\pi(y|x)||\pi_0(y|x))$. Given a fixed input $x$, KL divergence of the conditional policy is defined as:
\begin{equation}
\mathbb{D}_{\text{KL}}(\pi(y|x)||\pi_0(y|x))=\int_{\mathcal{Y}}\pi(y|x)\log\frac{\pi(y|x)}{\pi_0(y|x)}dy.
\end{equation}
Furthermore, we perform the Taylor expansion for the logarithmic term:
\begin{equation}
\begin{split}
\log\frac{\pi(y|x)}{\pi_0(y|x)} &= \log\frac{\pi_0(y|x)+\delta\pi(y|x)}{\pi_0(y|x)} = \log\bigg(1+\frac{\delta\pi(y|x)}{\pi_0(y|x)}\bigg) \\&\xlongequal{||\delta\pi||\rightarrow 0} \frac{\delta\pi(y|x)}{\pi_0(y|x)}-\frac{1}{2}\bigg(\frac{\delta\pi(y|x)}{\pi_0(y|x)}\bigg)^2 + o\big((\delta\pi(y|x))^2\big).
\end{split}
\end{equation}
Therefore, we can obtain that:
\begin{equation}
\begin{split}
\mathbb{D}_{\text{KL}}(\pi(y|x)||\pi_0(y|x)) &= \int_{\mathcal{Y}} (\pi_0(y|x)+\delta\pi(y|x))\Big[\frac{\delta\pi(y|x)}{\pi_0(y|x)}-\frac{1}{2}\Big(\frac{\delta\pi(y|x)}{\pi_0(y|x)}\Big)^2\Big] dy + o\big((\delta\pi(y|x))^2\big)\\ &=\int_{\mathcal{Y}} \Big(\delta\pi(y|x) + \frac{(\delta\pi(y|x))^2 }{\pi_0(y|x)} - \frac{1}{2} \frac{(\delta\pi(y|x))^2 }{\pi_0(y|x)}\Big) dy + o\big((\delta\pi(y|x))^2\big)\\ &=\int_{\mathcal{Y}} \Big(\delta\pi(y|x) + \frac{1}{2} \frac{(\delta\pi(y|x))^2 }{\pi_0(y|x)}\Big) dy + o\big((\delta\pi(y|x))^2\big) \\&= \int_{\mathcal{Y}}\frac{1}{2} \frac{(\delta\pi(y|x))^2 }{\pi_0(y|x)} dy + o\big((\delta\pi(y|x))^2\big).
\end{split}
\end{equation}
We perform a first-order expansion of the return functional in \Cref{optimal problem} as follows:
\begin{equation}
\begin{split}
\boldsymbol{\mathcal{F}}(\pi(y|x)) &= \boldsymbol{\mathcal{F}}(\pi_0(y|x) + \delta\pi(y|x)) \\&= \boldsymbol{\mathcal{F}}(\pi_0(y|x)) + \delta \boldsymbol{\mathcal{F}}(\pi_0(y|x); \delta\pi(y|x)) + o(\delta\boldsymbol{\mathcal{F}}(\pi_0(y|x); \delta\pi(y|x))).
\end{split}
\end{equation}
Hence, the functional local update problem can be further formulated as:
\begin{equation}
\max\limits_{\delta\pi(y|x)}\{\delta \boldsymbol{\mathcal{F}}(\pi_0(y|x); \delta\pi(y|x)) - \frac{1}{2\eta}\int_{\mathcal{Y}}\frac{(\delta\pi(y|x))^2 }{\pi_0(y|x)} dy\}
\end{equation}
To further solve this optimization problem, we incorporate \Cref{constrain} as a constraint. This leads to the construction of the following Lagrangian functional:
\begin{equation}
\label{Lagrangian functional}
\boldsymbol{\mathcal{L}}(\delta\pi(y|x), \lambda(x)) = \int_{\mathcal{Y}}r(x, y)\cdot\delta\pi(y|x) dy - \frac{1}{2\eta}\int_{\mathcal{Y}}\frac{(\delta\pi(y|x))^2 }{\pi_0(y|x)} dy + \int_{\mathcal{X}}\lambda(x)\Big(\int_{\mathcal{Y}}\delta\pi(y|x)dy\Big)dx
\end{equation}
Moreover, let's perform the Fréchet derivative of $\delta\pi(y|x)$ in \Cref{Lagrangian functional}. To summarize, we perform the computation in the following manner. Introduce any admissible perturbation function $h(x,y)$, we have that:
\begin{equation}
\int_{\mathcal{Y}}\frac{\delta\boldsymbol{\mathcal{L}}(\delta\pi(y|x),\lambda(x))}{\delta(\delta\pi(y|x))}h(x, y)dy = \frac{d}{d\epsilon}\boldsymbol{\mathcal{L}}(\delta\pi(y|x)+\epsilon h(x, y),\lambda(x))\bigg|_{\epsilon = 0}.
\end{equation}
We proceed term by term to compute the functional derivatives. For the first term of \Cref{Lagrangian functional}, we have that:
\begin{equation}
\boldsymbol{\mathcal{L}_1}(\delta\pi(y|x)+\epsilon h(x, y), \lambda(x)) = \int_{\mathcal{Y}} r(x, y)(\delta\pi(y|x)+\epsilon h(x, y)) dy.
\end{equation}
Hence, we can easily get that:
\begin{equation}
\label{L_1_derivative}
\frac{\delta\boldsymbol{\mathcal{L}_1}(\delta\pi(y|x),\lambda(x))}{\delta(\delta\pi(y|x))} = r(x, y).
\end{equation}
Furthermore, for the second term, we have that:
\begin{equation}
\boldsymbol{\mathcal{L}_2}(\delta\pi(y|x)+\epsilon h(x, y), \lambda(x)) = - \frac{1}{2\eta}\int_{\mathcal{Y}}\frac{(\delta\pi(y|x)+\epsilon h(x, y))^2}{\pi_0(y|x)} dy.
\end{equation}
In particular, we have that $(\delta\pi(y|x)+\epsilon h(x, y))^2 = (\delta\pi(y|x))^2 + 2\epsilon\cdot\delta\pi(y|x)\cdot h(x, y) + o(\epsilon)$, and we can get:
\begin{equation}
\frac{d}{d\epsilon}\boldsymbol{\mathcal{L}_2}(\delta\pi(y|x)+\epsilon h(x, y),\lambda(x))\bigg|_{\epsilon = 0} = -\frac{1}{\eta}\cdot\int_{\mathcal{Y}}\frac{\delta\pi(y|x)}{\pi_0(y|x)}\cdot h(x,y).
\end{equation}
Therefore, we can get that:
\begin{equation}
\label{L_2_derivative}
\frac{\delta\boldsymbol{\mathcal{L}_2}(\delta\pi(y|x),\lambda(x))}{\delta(\delta\pi(y|x))} = -\frac{1}{\eta}\cdot\frac{\delta\pi(y|x)}{\pi_0(y|x)}.
\end{equation}
Then, for the third term, which also serves as the constraint term:
\begin{equation}
\begin{split}
\boldsymbol{\mathcal{L}_3}(\delta\pi(y|x)+\epsilon h(x, y), \lambda(x)) =& \int_{\mathcal{X}}\lambda(x)\Big(\int_{\mathcal{Y}}\delta\pi(y|x) +\epsilon h(x, y) dy\Big)dx
\\=&\int_{\mathcal{X}}\lambda(x)\int_{\mathcal{Y}}\delta\pi(y|x) dy dx + \epsilon \cdot \int_{\mathcal{X}}\lambda(x)\int_{\mathcal{Y}}h(x, y)dy dx.
\end{split}
\end{equation}
Then, it can be readily derived that:
\begin{equation}
\frac{d}{d\epsilon}\boldsymbol{\mathcal{L}_3}(\delta\pi(y|x)+\epsilon h(x, y),\lambda(x))\bigg|_{\epsilon = 0} = \int_{\mathcal{X}}\lambda(x)\int_{\mathcal{Y}}h(x, y)dy dx.
\end{equation}
Thus, we can obtain that:
\begin{equation}
\label{L_3_derivative}
\frac{\delta\boldsymbol{\mathcal{L}_3}(\delta\pi(y|x),\lambda(x))}{\delta(\delta\pi(y|x))} = \lambda(x).
\end{equation}
Combining the results in \Cref{L_1_derivative}, \Cref{L_2_derivative} and \Cref{L_3_derivative}, the Fréchet derivative of $\boldsymbol{\mathcal{L}}(\delta\pi(y|x), \lambda(x))$ can be expressed as:
\begin{equation}
\frac{\delta\boldsymbol{\mathcal{L}}(\delta\pi(y|x),\lambda(x))}{\delta(\delta\pi(y|x))} = r(x, y) -\frac{1}{\eta}\cdot\frac{\delta\pi(y|x)}{\pi_0(y|x)}  + \lambda(x).
\end{equation}
By setting the Fréchet derivative of $\boldsymbol{\mathcal{L}}(\delta\pi(y|x), \lambda(x))$ to zero and rearranging the terms, we obtain the following expression:
\begin{equation}
\delta\pi(y|x) = \eta\cdot\pi_0(y|x) \cdot (r(x, y)+\lambda(x)).
\end{equation}
According to \Cref{equivalent_function}, we can easily draw the conclusion that $r(x, y)+\lambda(x) \sim r(x, y)$. Moreover, we refer to functions in this equivalence function class as $A(x, y)$, which corresponds to \underline{advantage function class}, while $\lambda(x)$ can be interpreted as the baseline.
Hence, we can obtain that:
\begin{equation}
\delta\pi(y|x)\propto\pi_0(y|x) \cdot A(x, y),
\end{equation}
which completes the proof.
\end{proof}
\section{Proof of Theorem 3}
\label{Proof of Theorem 3}
\begin{theorem}
Under the same conditions as in \Cref{optimal ascent}, after transforming to the tangent space of the probability simplex as expressed in \Cref{gamma-weighted variational metric}, the optimal ascent is characterized by:
\begin{equation}
\delta\pi(y|x)\propto\pi_0(y|x) \cdot \text{sign}(A(x, y))|A(x,y)|^{1+\gamma},
\end{equation}
where $\pi_0(y|x)$ denotes the current policy function, $\text{sign}(\cdot)$ denotes the sign function, $A(x, y)$ represents the advantage function class.
\end{theorem}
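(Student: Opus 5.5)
We mirror the proof of \Cref{optimal ascent}, replacing the Fisher--Rao quadratic penalty by the $\gamma$-weighted metric in \Cref{gamma-weighted variational metric}. For fixed $x$, the local problem is \Cref{monotone non-decreasing mapping}. As before, the first variation of the return functional is $\delta\boldsymbol{\mathcal{F}}(\pi_0;\delta\pi)=\int_{\mathcal{Y}} r(x,y)\,\delta\pi(y|x)\,dy$. By \Cref{equivalent_function}, any baseline $\lambda(x)$ can be added to $r$ without changing the induced gradient direction. The idea is therefore to work directly with a fixed representative $A(x,y)=r(x,y)+\lambda(x)$ of the advantage class, and to treat the weight $|A(x,y)|^\gamma$ as a fixed function of $y$, frozen at the current policy, that defines the metric.

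The Lagrangian is
\begin{equation*}
\boldsymbol{\mathcal{L}}(\delta\pi,\mu)=\int_{\mathcal{Y}} r(x,y)\,\delta\pi\,dy-\frac{1}{2\eta}\int_{\mathcal{Y}}\frac{(\delta\pi)^2}{|A(x,y)|^\gamma\,\pi_0(y|x)}\,dy+\int_{\mathcal{X}}\mu(x)\Big(\int_{\mathcal{Y}}\delta\pi\,dy\Big)dx,
\end{equation*}
where $\mu(x)$ is the multiplier for the constraint in \Cref{constrain}. We compute its Fréchet derivative term by term with a test perturbation $h(x,y)$, exactly as in \Cref{L_1_derivative,L_2_derivative,L_3_derivative}. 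The only change is in the second term, whose derivative becomes $-\frac{1}{\eta}\,\delta\pi/(|A|^\gamma\pi_0)$. Setting the total derivative to zero gives
\begin{equation*}
\delta\pi(y|x)=\eta\,\pi_0(y|x)\,|A(x,y)|^\gamma\,\big(r(x,y)+\mu(x)\big).
\end{equation*}
Because the objective is strictly concave in $\delta\pi$ wherever $|A|^\gamma\pi_0>0$, this stationary point is the maximizer.

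Next we identify $r+\mu$ with the advantage representative. Since $r+\mu$ and $A$ both lie in the class of \Cref{equivalent_function}, we take the representative used in the metric to be $A=r+\mu$. The product then becomes $|A|^\gamma A=\operatorname{sign}(A)|A|^{1+\gamma}$, which yields the claimed form $\delta\pi\propto\pi_0\,\operatorname{sign}(A)|A|^{1+\gamma}$.

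\textbf{Main obstacle.} This identification is the delicate step. The baseline appearing in the weight must coincide with the Lagrange multiplier, yet the constraint $\int\delta\pi\,dy=0$ now forces $\mu(x)=-\int \pi_0|A|^\gamma r\,dy\big/\int\pi_0|A|^\gamma\,dy$, which depends on $A$ itself. There are two ways to handle this. The first is a fixed-point/consistency argument: choose $\lambda$ so that $\int\pi_0\operatorname{sign}(A)|A|^{1+\gamma}\,dy=0$. Such a $\lambda$ exists by continuity and monotonicity in $\lambda$, via the intermediate value theorem. The second, which is the fallback matching the paper's style, is to invoke \Cref{equivalent_function} to absorb the residual baseline discrepancy into the equivalence class. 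This is the same step the paper uses to pass from $r+\lambda$ to $A$ in \Cref{optimal ascent}.

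Two minor points remain. Points where $A=0$ should be handled by a convention: there $\Phi_\gamma$ vanishes, the metric is infinite, and the optimal update is zero, which is consistent with $|A|^{1+\gamma}=0$. We also keep the standing absolute continuity assumption from \Cref{optimal ascent}.
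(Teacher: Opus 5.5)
Your core computation matches the paper's: the same Lagrangian with the $|A|^\gamma$-weighted penalty, the same term-by-term Fr\'echet derivative, and the same stationarity condition $\delta\pi=\eta\,\pi_0|A|^\gamma(r+\mu)$. You depart from the paper only at the last step. The paper cites \Cref{equivalent_function} to write $r+\lambda'(x)\sim r\sim A$ and reads off $\delta\pi\propto\pi_0|A|^\gamma A$. You instead fix the representative $A=r+\lambda$ by the consistency condition $\mathbb{E}_{\pi_0}[\text{sign}(A)|A|^{1+\gamma}]=0$. Because the multiplier is uniquely determined by $\int\pi_0|A|^\gamma(r+\mu)\,dy=0$ whenever $A\not\equiv 0$, this forces $\mu=\lambda$, so $r+\mu=A$ holds exactly.

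Your fixed-point route is the one that actually closes the argument, and you should drop the ``fallback.'' \Cref{equivalent_function} only lets you add a $y$-independent function to $f$ inside $\mathbb{E}_{\pi}[\nabla_\theta\log\pi\cdot f]$. Swapping $r+\mu$ for another representative $r+\lambda$ changes the weight by $|A|^\gamma(\mu-\lambda)$, which depends on $y$, so the lemma does not apply. Relatedly, $\pi_0|A|^\gamma A$ satisfies $\int\delta\pi\,dy=0$, i.e.\ is an admissible variation, only for your particular $\lambda$.

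Doing it properly has a cost. The statement then characterizes one distinguished representative, namely the unique root of the strictly increasing map $\lambda\mapsto\mathbb{E}_{\pi_0}[\text{sign}(r+\lambda)|r+\lambda|^{1+\gamma}]$. That root exists under a mild moment condition such as $\mathbb{E}_{\pi_0}|r|^{1+\gamma}<\infty$, which you should state. It does not cover an arbitrary member of the advantage class. This baseline is also generally not the mean, so the practical advantage $\text{sign}(\Delta r)|\Delta r|^{1+\gamma}$ with mean-centered $\Delta r$ agrees with the theorem only approximately.
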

\begin{proof}
In this context, we examine the functional local update problem equipped with the $\gamma$-weighted variational metric:
\begin{equation}
\label{monotone non-decreasing mapping_'}
\max\limits_{\delta\pi(y|x)}\{\delta \boldsymbol{\mathcal{F}}(\pi_0(y|x); \delta\pi(y|x)) - \frac{1}{2\eta}\int_{\mathcal{Y}}\frac{(\delta\pi(y|x))^2 }{|A(x,y)|^\gamma\cdot\pi_0(y|x)} dy\}.
\end{equation}
By imposing \Cref{constrain} as a constraint, we construct the following Lagrangian functional:
\begin{equation}
\begin{split}
\label{Lagrangian functional weighted}
&\boldsymbol{\mathcal{L'}}(\delta\pi(y|x), \lambda'(x)) \\= &\int_{\mathcal{Y}}r(x, y)\cdot\delta\pi(y|x) dy - \frac{1}{2\eta}\int_{\mathcal{Y}}\frac{(\delta\pi(y|x))^2 }{|A(x,y)|^\gamma\cdot\pi_0(y|x)} dy + \int_{\mathcal{X}}\lambda'(x)\Big(\int_{\mathcal{Y}}\delta\pi(y|x)dy\Big)dx
\end{split}
\end{equation}
Then, performing the Fréchet derivative of $\delta\pi(y|x)$ in \Cref{Lagrangian functional weighted}. Similarly, for any admissible perturbation function $h(x,y)$, we have that:
\begin{equation}
\int_{\mathcal{Y}}\frac{\delta\boldsymbol{\mathcal{L'}}(\delta\pi(y|x),\lambda'(x))}{\delta(\delta\pi(y|x))}h(x, y)dy = \frac{d}{d\epsilon}\boldsymbol{\mathcal{L}'}(\delta\pi(y|x)+\epsilon h(x, y),\lambda'(x))\bigg|_{\epsilon = 0}.
\end{equation}
We proceed term by term to compute the functional derivatives. For the first term of \Cref{Lagrangian functional weighted}, we have that:
\begin{equation}
\boldsymbol{\mathcal{L}'_1}(\delta\pi(y|x)+\epsilon h(x, y), \lambda'(x)) = \int_{\mathcal{Y}} r(x, y)(\delta\pi(y|x)+\epsilon h(x, y)) dy.
\end{equation}
Hence, we can get that:
\begin{equation}
\label{L_1_derivative_'}
\frac{\delta\boldsymbol{\mathcal{L}'_1}(\delta\pi(y|x),\lambda'(x))}{\delta(\delta\pi(y|x))} = r(x, y).
\end{equation}
Then, for the second term, we have that:
\begin{equation}
\boldsymbol{\mathcal{L}'_2}(\delta\pi(y|x)+\epsilon h(x, y), \lambda'(x)) = - \frac{1}{2\eta}\int_{\mathcal{Y}}\frac{(\delta\pi(y|x)+\epsilon h(x, y))^2}{|A(x,y)|^\gamma\cdot\pi_0(y|x)} dy.
\end{equation}
In particular, we have that $(\delta\pi(y|x)+\epsilon h(x, y))^2 = (\delta\pi(y|x))^2 + 2\epsilon\cdot\delta\pi(y|x)\cdot h(x, y) + o(\epsilon)$, and we can get that:
\begin{equation}
\frac{d}{d\epsilon}\boldsymbol{\mathcal{L}'_2}(\delta\pi(y|x)+\epsilon h(x, y),\lambda'(x))\bigg|_{\epsilon = 0} = -\frac{1}{\eta}\cdot\int_{\mathcal{Y}}\frac{\delta\pi(y|x)}{|A(x,y)|^\gamma\cdot\pi_0(y|x)}\cdot h(x,y).
\end{equation}
Therefore, we can get that:
\begin{equation}
\label{L_2_derivative_'}
\frac{\delta\boldsymbol{\mathcal{L}'_2}(\delta\pi(y|x),\lambda'(x))}{\delta(\delta\pi(y|x))} = -\frac{1}{\eta}\cdot\frac{\delta\pi(y|x)}{|A(x,y)|^\gamma\cdot\pi_0(y|x)}.
\end{equation}
Then, for the third term, which also serves as the constraint term:
\begin{equation}
\begin{split}
\boldsymbol{\mathcal{L}'_3}(\delta\pi(y|x)+\epsilon h(x, y), \lambda'(x)) =& \int_{\mathcal{X}}\lambda'(x)\Big(\int_{\mathcal{Y}}\delta\pi(y|x) +\epsilon h(x, y) dy\Big)dx
\\=&\int_{\mathcal{X}}\lambda'(x)\int_{\mathcal{Y}}\delta\pi(y|x) dy dx + \epsilon \cdot \int_{\mathcal{X}}\lambda'(x)\int_{\mathcal{Y}}h(x, y)dy dx.
\end{split}
\end{equation}
Then, it can be readily derived that:
\begin{equation}
\frac{d}{d\epsilon}\boldsymbol{\mathcal{L}'_3}(\delta\pi(y|x)+\epsilon h(x, y),\lambda'(x))\bigg|_{\epsilon = 0} = \int_{\mathcal{X}}\lambda'(x)\int_{\mathcal{Y}}h(x, y)dy dx.
\end{equation}
Thus, we can obtain that:
\begin{equation}
\label{L_3_derivative_'}
\frac{\delta\boldsymbol{\mathcal{L}'_3}(\delta\pi(y|x),\lambda'(x))}{\delta(\delta\pi(y|x))} = \lambda'(x).
\end{equation}
Combining the results in \Cref{L_1_derivative_'}, \Cref{L_2_derivative_'} and \Cref{L_3_derivative_'}, the Fréchet derivative of $\boldsymbol{\mathcal{L}'}(\delta\pi(y|x), \lambda'(x))$ can be expressed as:
\begin{equation}
\frac{\delta\boldsymbol{\mathcal{L}'}(\delta\pi(y|x),\lambda'(x))}{\delta(\delta\pi(y|x))} = r(x, y) -\frac{1}{\eta}\cdot\frac{\delta\pi(y|x)}{|A(x,y)|^\gamma\cdot\pi_0(y|x)}  + \lambda'(x).
\end{equation}
Setting the Fréchet derivative of $\boldsymbol{\mathcal{L}'}(\delta\pi(y|x), \lambda'(x))$ to zero and rearranging the terms, we obtain the following expression:
\begin{equation}
\delta\pi(y|x) = \eta\cdot|A(x,y)|^\gamma\cdot \pi_0(y|x)\cdot (r(x, y)+\lambda'(x)).
\end{equation}
Similarly, according to \Cref{equivalent_function}, we can readily obtain that $r(x, y)+\lambda'(x) \sim r(x, y) \sim A(x, y)$.
Hence, we have:
\begin{equation}
\delta\pi(y|x)\propto\pi_0(y|x) \cdot |A(x,y)|^\gamma \cdot A(x, y).
\end{equation}
Let $\text{sign}(\cdot)$ denote the indicator function, and it is straightforward to obtain that:
\begin{equation}
|A(x,y)|^\gamma \cdot A(x, y) = \text{sign}(A(x, y)) \cdot |A(x,y)|^{1+\gamma},
\end{equation}
which completes the proof.
\end{proof}
\section{Trust-Region Bounds on \texorpdfstring{$\gamma$}{gamma}}
\label{Trust-Region Bounds on gamma}
Moreover, for training stability, $\gamma$ should be properly bounded and not excessively large. This is guaranteed through a KL-divergence constraint. Accordingly, we consider the following restriction:
\begin{equation}
\label{restriction}
\mathbb{E}_{x\sim\upsilon^{\pi}}{\mathbb{D}_{\text{KL}}(\pi(y|x)||\pi_{0}(y|x))} \leq \zeta.
\end{equation}
The super-linearly reshaped advantage in \Cref{reshaped super-linear advantage} modifies the gradient at each update step, which in turn induces a change in the KL divergence. Consequently, we investigate how large $\gamma$ can be under a given KL-divergence threshold in \Cref{restriction}, while still guaranteeing that each update step remains within the trust region. \Cref{implicit range} characterizes the implicit range of $\gamma$ determined by an arbitrary reward function, under the following two assumptions.
\begin{assumption}[Differentiable Policy] 
\label{Differentiable Policy}
The policy $\pi_{\theta}(y|x)$ is differentiable in $\theta$, and:
\begin{equation}
\mathbb{E}_{\pi_{\theta}}[||\nabla_{\theta}\log\pi_{\theta}(y|x)||^2] \leq G_{\text{max}}.
\end{equation}
\end{assumption}
\begin{assumption}[Finite Reward Moments]
\label{Finite Reward Moments}
For a given $\gamma\geq 0$, the $2(1+\gamma)$-th moment of the reward exists:
\begin{equation}
\mathbb{E}[|\Delta r|^{2(1+\gamma)}]<\infty.
\end{equation}
\end{assumption}
Notably, Assumption \ref{Finite Reward Moments} holds for bounded or sub-Gaussian rewards, which is often the case for rewards in text-to-image tasks. Based on these, we state the following theorem.
\begin{theorem}
\label{implicit range}
Let's consider the first-order update $\Delta\theta=\tau\cdot\widetilde{g}$, where $\widetilde{g}$ represents the gradient, and we have:
\begin{equation}
\widetilde{g} = \mathbb{E}_{\pi_{\theta}}[|\Delta r|^\gamma\cdot\Delta r\cdot\nabla_{\theta}\log\pi_{\theta}]. 
\end{equation}
We define Hessian matrix of average KL divergence between the policy before and after the update:
\begin{equation}
\mathcal{H} = \mathcal{H}[\mathbb{E}_{x\sim\upsilon^{\pi}}\mathbb{D}_{\text{KL}}(\pi(y|x)||\pi_0(y|x))],
\end{equation}
with the maximum eigenvalue of the matrix being $\lambda_{\text{max}}(\mathcal{H})$.
Then, the $2(1+\gamma)$-th moment of the reward have the following upper bound:
\begin{equation}
M_{2(1+\gamma)} = \mathbb{E}[|\Delta r|^{2(1+\gamma)}]\leq \frac{2\zeta}{\tau^2\cdot\lambda_{\text{max}}(\mathcal{H})\cdot G_{\text{max}}}.
\end{equation}
\end{theorem}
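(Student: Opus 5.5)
The plan is to combine a second-order trust-region expansion of the KL divergence with a Cauchy--Schwarz bound on the reshaped gradient $\widetilde{g}$, and then read off the constraint on $M_{2(1+\gamma)}$ from \Cref{restriction}.

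\textbf{Step 1: second-order expansion of the KL.} Write $D(\theta') := \mathbb{E}_{x\sim\upsilon^{\pi}}\mathbb{D}_{\text{KL}}(\pi_{\theta'}(y|x)\|\pi_{\theta}(y|x))$. At $\theta'=\theta$ this function has value zero. Its gradient there also vanishes, because $\mathbb{E}_{\pi_\theta}[\nabla_\theta\log\pi_\theta]=0$; this is the same cancellation used in \Cref{equivalent_function}. A Taylor expansion along the update $\Delta\theta=\tau\widetilde g$ therefore gives
\[
D(\theta+\tau\widetilde g)\approx \tfrac{\tau^2}{2}\,\widetilde g^{\top}\mathcal H\,\widetilde g\le \tfrac{\tau^2}{2}\,\lambda_{\max}(\mathcal H)\,\|\widetilde g\|^2 .
\]
This is the first-order regime the theorem refers to, so I will discard higher-order terms exactly as the paper does in the proof of \Cref{optimal ascent}.

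\textbf{Step 2: bounding $\|\widetilde g\|^2$.} Note that $|\Delta r|^\gamma\Delta r=\text{sign}(\Delta r)|\Delta r|^{1+\gamma}$. Applying Cauchy--Schwarz (Jensen for the norm, then Cauchy--Schwarz on the product) gives
\[
\|\widetilde g\|^2\le \mathbb{E}\big[|\Delta r|^{2(1+\gamma)}\big]\;\mathbb{E}_{\pi_\theta}\big[\|\nabla_\theta\log\pi_\theta\|^2\big]\le M_{2(1+\gamma)}\,G_{\max}.
\]
Here Assumption~\ref{Finite Reward Moments} guarantees the first factor is finite, and Assumption~\ref{Differentiable Policy} supplies the bound on the second.

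\textbf{Step 3: combining with the trust region.} Steps 1 and 2 together give
\[
D\le \tfrac{\tau^2}{2}\,\lambda_{\max}(\mathcal H)\,G_{\max}\,M_{2(1+\gamma)}.
\]
To ensure \Cref{restriction} holds, I require this upper bound to be at most $\zeta$. Solving for the moment yields the stated condition
\[
M_{2(1+\gamma)}\le \frac{2\zeta}{\tau^2\lambda_{\max}(\mathcal H)G_{\max}}.
\]
Because $M_{2(1+\gamma)}$ is nondecreasing in $\gamma$ once rewards are normalized with $|\Delta r|\ge1$ on the relevant mass (or simply monotone for the given reward law), this condition implicitly caps $\gamma$.

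\textbf{Main obstacle.} The delicate point is logical direction. As stated, the theorem reads as a consequence, but what the argument actually delivers is a \emph{sufficient} condition for staying inside the trust region. I would state it that way: the bound is imposed so that \Cref{restriction} holds. A second subtlety is the Cauchy--Schwarz step. It must pair $|\Delta r|^{1+\gamma}$ with the score under the same expectation, and the $x$-average has to be handled by applying the bound conditionally on $x$ before averaging. I expect the rest to be routine.
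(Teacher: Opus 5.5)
Your proposal follows the paper's argument step for step: the second-order approximation $\frac{\tau^2}{2}\widetilde g^{\top}\mathcal H\widetilde g$, the $\lambda_{\max}(\mathcal H)$ bound, $\|\widetilde g\|^2\le M_{2(1+\gamma)}G_{\max}$, and then \emph{imposing} that the resulting bound be at most $\zeta$. Your reading of the result as a sufficient condition is exactly what the paper's proof establishes (``to maintain a safe update, we require\ldots'').

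The one fix is the parenthetical in Step 2. Applying Jensen first (as the paper does) leaves $\mathbb{E}\big[|\Delta r|^{2(1+\gamma)}\|\nabla_\theta\log\pi_\theta\|^2\big]$. This does not factor into a product of expectations unless you assume independence, which the paper does tacitly by writing an equality. Cauchy--Schwarz applied to it only gives higher moments. Instead, apply Cauchy--Schwarz directly to the vector expectation:
$\|\widetilde g\|=\sup_{\|u\|=1}\mathbb{E}\big[\text{sign}(\Delta r)|\Delta r|^{1+\gamma}\langle\nabla_\theta\log\pi_\theta,u\rangle\big]\le M_{2(1+\gamma)}^{1/2}\big(\mathbb{E}\|\nabla_\theta\log\pi_\theta\|^2\big)^{1/2}$.
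This yields your displayed bound rigorously and is cleaner than the paper's step.
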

\begin{proof}
For the first-order update, e.g. $\Delta\theta=\tau\cdot\widetilde{g}$, we can easily obtain the following estimate for the KL divergence:
\begin{equation}
\label{estimate}
\mathbb{E}_{x\sim\upsilon^{\pi}}\mathbb{D}_{\text{KL}}(\pi(y|x)||\pi_0(y|x)) \approx \frac{\tau^2}{2}\cdot \widetilde{g}^T\mathcal{H}\widetilde{g}.
\end{equation}
By definition, $\lambda_{\text{max}}(\mathcal{H})$ corresponds to the largest eigenvalue of the Hessian matrix $\mathcal{H}$; hence, one can easily show that:
\begin{equation}
\label{Hessian matrix inequality}
\widetilde{g}^T\mathcal{H}\widetilde{g}\leq\lambda_{\text{max}}(\mathcal{H})\cdot||\widetilde{g}||^2.
\end{equation}
Let's consider upper bound of the gradient’s $l_2$-norm:
\begin{equation}
||\widetilde{g}||^2 = ||\mathbb{E}_{\pi_{\theta}}[|\Delta r|^\gamma\cdot\Delta r\cdot\nabla_{\theta}\log\pi_{\theta}]||^2
\end{equation}
Applying the Jensen inequality, it is readily to obtain that:
\begin{equation}
\label{Jensen inequality}
\begin{split}
||\mathbb{E}_{\pi_{\theta}}[|\Delta r|^\gamma\cdot\Delta r\cdot\nabla_{\theta}\log\pi_{\theta}]||^2 &\leq \mathbb{E}_{\pi_{\theta}} || [|\Delta r|^\gamma\cdot\Delta r\cdot\nabla_{\theta}\log\pi_{\theta}]||^2\\
&= \underbrace{[\mathbb{E} |\Delta r|^{2(1+\gamma)}]}_{M_{2(1+\gamma)}} \cdot \underbrace{[\mathbb{E_{\pi_{\theta}}}||\nabla_{\theta}\log\pi_{\theta}||^2]}_{G_{\text{max}}}.
\end{split}
\end{equation}
Putting together \Cref{estimate}, \Cref{Hessian matrix inequality} and \Cref{Jensen inequality}, we arrive at:
\begin{equation}
\mathbb{E}_{x\sim\upsilon^{\pi}}\mathbb{D}_{\text{KL}}(\pi(y|x)||\pi_0(y|x))\leq\frac{\tau^2}{2}\lambda_{\text{max}}(\mathcal{H})\cdot M_{2(1+\gamma)}\cdot G_{\text{max}}.
\end{equation}
Accordingly, to maintain a safe update, we require that:
\begin{equation}
\frac{\tau^2}{2}\lambda_{\text{max}}(\mathcal{H})\cdot M_{2(1+\gamma)}\cdot G_{\text{max}}\leq \zeta.
\end{equation}
Rearranging the inequality yields an implicit upper bound on $\gamma$, determined by the $2(1+\gamma)$-th moment of the reward:
\begin{equation}
M_{2(1+\gamma)} \leq \frac{2\zeta}{\tau^2\cdot\lambda_{\text{max}}(\mathcal{H})\cdot G_{\text{max}}},
\end{equation}
which completes the proof.
\end{proof}
Furthermore, we consider imposing commonly utilized assumptions on the reward model. Firstly, most reward function models exhibit boundedness, with CLIP Score \citep{radford2021learning}, HPS-v1 \citep{wu2023human}, HPS-v2 \citep{wu2023human}, PickScore \citep{kirstain2023pick} and Aesthetic Score \citep{schuhmann2022laion} serving as representations. Accordingly, \Cref{Bounded Rewards} presents the corresponding result under the bounded-reward setting. Furthermore, as most reward function models are trained on millions of samples, their outputs can be reasonably approximated as sub-Gaussian distribution, with the resulting analysis summarized in \Cref{Sub-Gaussian Rewards}.
\begin{corollary}[Bounded Rewards]
\label{Bounded Rewards}
Suppose we have that $|\Delta r|\leq R_{\text{max}}$, then it can be derived that:
\begin{equation}
\gamma\leq\frac{\log(\frac{2\zeta}{\tau^2\cdot\lambda_{\text{max}}(\mathcal{H})\cdot G_{\text{max}}})}{2\log R_{\text{max}}} -  1,
\end{equation}
showing that $\gamma$ is strictly bounded as a function of the reward upper bound.
\end{corollary}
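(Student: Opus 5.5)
The plan is to obtain the corollary directly from \Cref{implicit range}, by bounding the moment $M_{2(1+\gamma)}$ with the almost-sure bound on $|\Delta r|$ and then solving for $\gamma$. To keep notation short, set
\[
C := \frac{2\zeta}{\tau^2\cdot\lambda_{\text{max}}(\mathcal{H})\cdot G_{\text{max}}}.
\]
With this notation, \Cref{implicit range} says that the trust-region condition in \Cref{restriction} is guaranteed as soon as $M_{2(1+\gamma)} \leq C$. Assumption~\ref{Finite Reward Moments} holds automatically here, because $\Delta r$ is bounded.

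\textbf{Step 1: bound the moment.} Since $|\Delta r|\leq R_{\text{max}}$ almost surely, monotonicity of $t\mapsto t^{2(1+\gamma)}$ on $[0,\infty)$ gives
\[
M_{2(1+\gamma)}=\mathbb{E}\big[|\Delta r|^{2(1+\gamma)}\big]\leq R_{\text{max}}^{2(1+\gamma)}.
\]
Rather than requiring the exact moment to be at most $C$, we will require this worst-case bound to be at most $C$. That is, we impose $R_{\text{max}}^{2(1+\gamma)}\leq C$, which is a sufficient condition for the hypothesis of \Cref{implicit range}. This is the natural reading of the corollary, since it must hold uniformly over all bounded reward distributions.

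\textbf{Step 2: solve for $\gamma$.} Take logarithms of $R_{\text{max}}^{2(1+\gamma)}\leq C$ to get $2(1+\gamma)\log R_{\text{max}}\leq \log C$. Dividing by $2\log R_{\text{max}}$ and subtracting $1$ gives
\[
\gamma\leq \frac{\log C}{2\log R_{\text{max}}}-1,
\]
which is the stated bound.

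\textbf{Main obstacle: the sign of $\log R_{\text{max}}$.} The division in Step 2 preserves the inequality only when $\log R_{\text{max}}>0$, i.e.\ $R_{\text{max}}>1$. I would state this as an implicit assumption of the corollary. It can be enforced by rescaling the reward, which the batch-level normalization already does in practice.

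When $R_{\text{max}}\leq 1$ the situation is different. The map $\gamma\mapsto R_{\text{max}}^{2(1+\gamma)}$ is then non-increasing, so increasing $\gamma$ never enlarges the moment bound and no finite upper bound on $\gamma$ arises; if $R_{\text{max}}<1$ the same division would in fact flip the inequality into a lower bound. I would remark on both cases after the proof. Finally, the conclusion that $\gamma$ is strictly bounded follows because, for fixed $\zeta,\tau,\lambda_{\text{max}}(\mathcal{H}),G_{\text{max}}$ and $R_{\text{max}}>1$, the right-hand side is a finite constant.
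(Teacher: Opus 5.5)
Your proof is correct and follows the paper's argument: you bound $M_{2(1+\gamma)}$ by $R_{\text{max}}^{2(1+\gamma)}$, require this worst-case bound to be at most $2\zeta/(\tau^2\lambda_{\text{max}}(\mathcal{H})G_{\text{max}})$, and take logarithms to solve for $\gamma$. Your caveat is a genuine point the paper's ``refining this inequality'' step passes over silently: dividing by $2\log R_{\text{max}}$ requires $R_{\text{max}}>1$, and for $R_{\text{max}}\leq 1$ the argument gives no upper bound on $\gamma$ (a lower bound instead when $R_{\text{max}}<1$).
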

\begin{proof}
As we have that $|\Delta r|\leq R_{\text{max}}$, it is readily to obtain that:
\begin{equation}
\mathbb{E}[|\Delta r|^{2(1+\gamma)}] \leq R_{\text{max}}^{2(1+\gamma)}.
\end{equation}
Hence, we set that:
\begin{equation}
R_{\text{max}}^{2(1+\gamma)} \leq \frac{2\zeta}{\tau^2\cdot\lambda_{\text{max}}(\mathcal{H})\cdot G_{max}}.
\end{equation}
By refining this inequality, we readily obtain that:
\[
\gamma\leq\frac{\log(\frac{2\zeta}{\tau^2\cdot\lambda_{\text{max}}(\mathcal{H})\cdot G_{\text{max}}})}{2\log R_{\text{max}}} -  1,
\]
which completes the proof.
\end{proof}
\begin{corollary}[Sub-Gaussian Rewards]
\label{Sub-Gaussian Rewards}
Suppose that $\Delta r$ follows a sub-Gaussian distribution and we have the $\psi_2$-Orlicz norm defined as:
\begin{equation}
||\Delta r||_{\psi_2}:=\inf\{C>0: \mathbb{E}[\exp(\frac{(\Delta r)^2}{C^2})\leq 2]\}\leq K_r.
\end{equation}
Then, the implicit range of $\gamma$ is given by:
\begin{equation}
(1+\gamma)[\log(1+\gamma)+\log(2C^2K_r^2)]\leq\log\frac{2\zeta}{\tau^2\lambda_{\text{max}}(\mathcal{H})G_{\text{max}}},
\end{equation}
showing that $\gamma$ can increase safely only at logarithmic scale.
\end{corollary}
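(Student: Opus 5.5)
The corollary follows from \Cref{implicit range} once we bound the moment $M_{2(1+\gamma)}$ using the sub-Gaussian assumption. \Cref{implicit range} gives the safe-update condition
\[
M_{2(1+\gamma)}\leq \frac{2\zeta}{\tau^2\lambda_{\text{max}}(\mathcal{H})G_{\text{max}}}.
\]
It therefore suffices to produce an explicit upper bound $B(\gamma)$ with $M_{2(1+\gamma)}\le B(\gamma)$ and to require $B(\gamma)$ to be at most the right-hand side. Taking logarithms of that requirement should then produce the displayed inequality.

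\textbf{Step 1: moment bound from the Orlicz norm.} I use the standard sub-Gaussian moment bound: if $\|\Delta r\|_{\psi_2}\le K_r$, then for every $p\ge 1$,
\[
\mathbb{E}|\Delta r|^p\le (C K_r\sqrt{p})^p
\]
for an absolute constant $C$. To derive it, first apply Markov's inequality to $\exp((\Delta r)^2/K_r^2)$, which gives the tail bound $\mathbb{P}(|\Delta r|>t)\le 2e^{-t^2/K_r^2}$. Then integrate, $\mathbb{E}|\Delta r|^p=\int_0^\infty p t^{p-1}\mathbb{P}(|\Delta r|>t)\,dt\le p K_r^p\Gamma(p/2)$. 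Finally use Stirling to get $\Gamma(p/2)\le (p/2)^{p/2}$ up to constants.

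\textbf{Step 2: specialise to $p=2(1+\gamma)$.} With this choice,
\[
M_{2(1+\gamma)}\le \big(C^2K_r^2\cdot 2(1+\gamma)\big)^{1+\gamma}=\exp\!\big((1+\gamma)[\log(1+\gamma)+\log(2C^2K_r^2)]\big).
\]
The right-hand side has exactly the form appearing on the left of the corollary's inequality.

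\textbf{Step 3: conclude.} Requiring this bound to be at most $2\zeta/(\tau^2\lambda_{\text{max}}(\mathcal{H})G_{\text{max}})$ and taking logarithms yields the stated implicit range. The left side grows like $(1+\gamma)\log(1+\gamma)$, so the admissible $\gamma$ grows only on the order of $L/\log L$, where $L=\log\frac{2\zeta}{\tau^2\lambda_{\text{max}}(\mathcal{H})G_{\text{max}}}$. This is the ``logarithmic scale'' claim.

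\textbf{Main obstacle.} The delicate part is tracking the absolute constant $C$ through Step 1 so that it appears exactly as $2C^2K_r^2$ inside the logarithm. I would absorb all Stirling and factor-of-$p$ constants into $C$, which is legitimate because the corollary leaves $C$ unspecified. The one case needing care is $p<2$ (i.e. small $\gamma$), where Stirling is loose; I would handle it by enlarging $C$ so the bound still holds.
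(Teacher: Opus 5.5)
Your proposal is correct and follows the paper's proof: the paper also applies the standard sub-Gaussian moment bound $(\mathbb{E}|\Delta r|^k)^{1/k}\le C\|\Delta r\|_{\psi_2}\sqrt{k}$ (it cites Vershynin, whereas you rederive it via Markov's inequality, the tail integral and a Gamma-function estimate), sets $k=2(1+\gamma)$, and takes logarithms of the resulting sufficient condition for the moment bound in \Cref{implicit range}. Your worry about $p<2$ does not arise, since $\gamma\ge 0$ forces $p=2(1+\gamma)\ge 2$, and there $p\,\Gamma(p/2)=2\Gamma(p/2+1)\le 2(p/2)^{p/2}$ already gives the bound with $C=1$.
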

\begin{proof}
As shown in the classical literature on high-dimensional probability \citep{vershynin2025high}, it is well known that a random variable $x$ being sub-Gaussian is equivalent to having a finite $\psi_2$-Orlicz norm. And we also have the following inequality:
\begin{equation}
(\mathbb{E}|\Delta r|^k)^\frac{1}{k}\leq C||\Delta r||_{\psi_2}\cdot \sqrt{k}.
\end{equation}
Equivalently, this admits the following formulation, yielding an upper bound on expected higher-order moments of $\Delta r$:
\begin{equation}
\mathbb{E}[|\Delta r|^k]\leq (C\cdot ||\Delta r||_{\psi _2})^k\cdot k^{\frac{k}{2}}.
\end{equation}
Consider the $2(1+\gamma)$-th moment of the reward, and set that $k = 2(1+\gamma)$:
\begin{equation}
\mathbb{E}[|\Delta r|^{2(1+\gamma)}]\leq (C\cdot K_{r})^{2(1+\gamma)}\cdot (2(1+\gamma))^{(1+\gamma)}.
\end{equation}
Hence, we set that:
\begin{equation}
(C\cdot K_{r})^{2(1+\gamma)}\cdot (2(1+\gamma))^{(1+\gamma)} \leq \frac{2\zeta}{\tau^2\lambda_{\text{max}}(\mathcal{H})G_{\text{max}}}.
\end{equation}
Taking logarithms on both sides and applying the necessary simplifications, we can obtain:
\[
(1+\gamma)[\log(1+\gamma)+\log(2C^2K_r^2)]\leq\log\frac{2\zeta}{\tau^2\lambda_{\text{max}}(\mathcal{H})G_{\text{max}}},
\]
which completes the proof.
\end{proof}
In large-scale practical implementations, $\gamma$ can be treated as a trainable parameter that is coupled with the KL divergence, allowing the training process to adaptively adjust it based on the current context. As shown in \Cref{Bounded Rewards} and \Cref{Sub-Gaussian Rewards}, $\gamma$ is upper-bounded and grows at most logarithmically.
\paragraph{Discussion.}  The theoretical characterization validates the soundness of the super-linear advantage shaping mechanism, confirming that it does not compromise the training stability that is critical for fine-tuning large text-to-image models. Moreover, for quantities such as $G_{\text{max}}$ and $\lambda_{\text{max}}(\mathcal{H})$ that are difficult to compute exactly in large-scale settings, we can still obtain efficient and conservative upper bounds with negligible overhead. For $G_{\text{max}}$, it corresponds to an upper bound on the expected squared norm of the policy gradient. Monitoring gradient norms is already a standard practice in modern large-model training pipelines, where it is widely used to detect gradient explosion or vanishing gradients. As a result, we can directly reuse these existing training statistics to derive an empirical, conservative upper bound for $G_{\text{max}}$ at zero additional computational cost. $\mathcal{H}$ is the Hessian of the KL divergence, which is equivalent to the Fisher Information Matrix (FIM) of the policy. To estimate its maximum eigenvalue, it is unnecessary to explicitly instantiate the full FIM. Instead, we can leverage the power iteration method, a well-established matrix-free algorithm \citep{martens2010deep} that requires only matrix-vector multiplications. In practice, this reduces to computing Fisher-vector products (FVPs), whose cost is comparable to a single backward pass. For large models, the method typically converges within 10–20 iterations to yield an accurate and conservative estimate of the dominant eigenvalue. The total overhead is therefore equivalent to only 10–20 backward passes, which is negligible compared with the overall cost of fine-tuning billion-parameter models. Moreover, this estimation need not be performed at every training step; periodic evaluation suffices, further reducing the already modest overhead. Such a paradigm has been widely adopted and validated in prior large-scale optimization work \citep{gupta2018shampoo}, demonstrating its practical feasibility.

\section{Additional Related Works Statement}
\label{Additional Related Work Statement}
In recent years, rapid advances in deep learning have significantly accelerated the development of text-to-image generation \citep{zhang2023text, wu2026visual, mansimov2015generating, reed2016generative, chen2018attention, shoshan2021gan, ding2021cogview, ding2022cogview2, gu2022vector, rombach2022high, podellsdxl, esser2024scaling, flux2024, xie2024show, xie2025show, wu2025qwen,  song2026cologen, song2025query}, establishing it as one of the central applications in the area of computer vision \citep{sun2025reinforcement, shen2024explanatory, yin2025floorplan,  fang2026threading, fang2025viss, wang2026refalign, zhang2026sama}. Despite remarkable capabilities demonstrated by these models, improving human value alignment and instruction-following ability has always been an open challenge. 
\subsection{Reinforcement Post-Training for Large Language Models.} In recent years, rapid progress in machine learning, especially in reinforcement learning \citep{xia2025delay, suncalibration}, has fueled the growing adoption of reinforcement-learning-based post-training paradigms. They are typically first applied to the post-training of large language models (LLMs). As a representative framework, \underline{reinforcement learning from human feedback (RLHF)} \citep{christiano2017deep, ziegler2019fine} provides a principled way to incorporate human preferences into the optimization process, thereby guiding model behavior toward alignment with human values and intended objectives. At a high level, RLHF typically comprises three essential stages \citep{casper2023open}: the collection of human feedback, the construction of a reward model, and the subsequent optimization of the policy. This framework enables human intent to be conveyed implicitly, avoiding the need for explicit reward specification, while exploiting preference-based judgments that are generally more accessible than expert demonstrations. Prior to the adoption of RLHF, they are commonly aligned with human preferences by means of supervised fine-tuning (SFT) on curated demonstration data. In RLHF-based post-training, human feedback is incorporated as reward signals to guide policy optimization, encouraging large language models to produce responses that better align with accuracy, relevance, and informativeness. This iterative learning process enables continual performance improvement and has underpinned the success of many state-of-the-art models, exemplified by OpenAI’s GPT-4 \citep{achiam2023gpt}, Meta’s Llama 3 \citep{touvron2023llama}, Google’s Gemma \citep{team2025gemma}, and so on. Despite their success, RLHF-based approaches often entail high computational costs, demand extensive hyperparameter tuning. In response, methods such as \underline{direct preference optimization (DPO)} \citep{rafailov2023direct} have emerged, which optimize policy models directly with a preference-based loss, achieving results on par with RLHF while reducing training complexity. Thanks to its theoretical elegance and computational efficiency, DPO-style methods have garnered significant attention, inspiring a growing body of research investigating its diverse implementations and applications \citep{hong2024orpo, meng2024simpo, ethayarajh2024kto, pang2024iterative, amini2024direct, xucontrastive, liu2025lipo}. Very recently, the \underline{Group Relative Policy Optimization (GRPO)} \citep{shao2024deepseekmath} paradigm has recently emerged as a focal point in LLM research, owing to its ability to improve explainability in challenging tasks, including mathematical problem solving, code generation, and logical reasoning, as well as to facilitate steady gains in overall model accuracy. Inspired by this paradigm, DeepSeek-R1 \citep{guo2025deepseek} employs a combination of supervised fine-tuning (SFT) and reinforcement learning from verifiable rewards (RLVR), enabling the model to acquire strong reasoning capabilities while relying on a relatively small amount of supervised data. As a significant breakthrough, this paradigm has driven extensive research efforts aimed at improving the efficiency and effectiveness of algorithmic training \citep{yu2025dapo, lin2025cppo, liu2025understanding, yue2025vapo, zheng2025group, chen2025minimax, zhao2025geometric, zhang2025survey}.
\subsection{RLHF-based Post-Training for Text-to-Image Models.} In the early attempts to apply reinforcement learning to the post-training of text-to-image models, RLHF-based techniques constituted the dominant paradigm. Among these, reward-weighted likelihood maximization \citep{lee2023aligning} introduced a three-stage procedure that harnesses RLHF to enhance alignment between generated images and textual prompts. DRAFT \citep{clark2023directly} proposes a streamlined yet effective approach that fine-tunes generative models by maximizing differentiable reward signals. DOODL \citep{wallace2023end} focuses on the optimization of initial diffusion noise vectors by computing losses over fully generated outputs, allowing iterative refinement of individual generations during inference. ReFL \citep{xu2023imagereward} implements a two-stage framework: a reward model, ImageReward, is first trained on human preference data, and subsequently, fine-tuning proceeds with randomly selected timesteps to predict final images, stabilizing the training process and mitigating overfitting to the terminal denoising step. DDPO \citep{blacktraining} formalizes the diffusion denoising process as a Markov Decision Process (MDP) and updates pre-trained models via policy gradient optimization to maximize rewards derived from human feedback. In a similar vein, DPOK \citep{fan2023dpok} combines reinforcement learning with reverse KL regularization to optimize scored rewards, applying this strategy to both RL-based and supervised fine-tuning scenarios. 
\subsection{DPO-Style Post-Training for Text-to-Image Models.} Inspired by the success of DPO in achieving effective alignment of LLMs through implicit reward estimation \citep{rafailov2023direct, lv2025hidden}, several studies have explored similar principles in other domains, grounding their approaches in the concept of implicit reward modeling. Diffusion-DPO \citep{wallace2024diffusion} extends the DPO framework by leveraging the evidence lower bound (ELBO) to formulate a fully differentiable objective, and further approximates the reverse diffusion process using the forward trajectory. D3PO \citep{yang2024using} models the denoising procedure as a multi-step Markov decision process (MDP), demonstrating that directly updating the policy based on human preferences within this MDP is equivalent to first estimating the optimal reward function. Step-by-step preference optimization (SPO) \citep{liang2025aesthetic} introduces step-level evaluation and adjustment, ensuring that preference signals are accurately propagated at each denoising stage. Collectively, the paradigm encompasses three key directions for enhancement. The first avenue focuses on leveraging insights from established generative model alignment techniques, including those developed for LLMs, and systematically modifying them to suit text-to-image model applications \citep{li2024aligning, gu2024diffusion, hong2024margin, sun2025positive, sun2025identical}. The second avenue involves combining the DPO framework with the intrinsic mechanisms of text-to-image models, with particular emphasis on diffusion and flow-matching \citep{tang2024tuning, gambashidze2024aligning, eyring2024reno}. The third avenue focuses on investigating novel training paradigms aimed at enhancing performance and improving efficiency \citep{yuan2024self, park2024direct, karthik2025scalable, huang2025patchdpo, sun2025generalizing, sun2025diffusion, bai2025entropy}.
\subsection{GRPO-Style Post-Training for Text-to-Image Models.} Benefiting from the remarkable success of DeepSeek-R1 \citep{guo2025deepseek}, the application of the GRPO paradigm in text-to-image generation has begun to garner growing interest from the research community. Depending on the type of model being fine-tuned, GRPO-style post-training has developed along three primary technical research pathways.

The first line of research investigates online reinforcement learning optimization in the context of \textbf{diffusion models and flow matching models}; and this also forms the core emphasis of our study. Flow-GRPO \citep{liu2025flow} attains computationally efficient RL-based optimization by introducing stochasticity through ODE-to-SDE conversion and by reducing the number of denoising steps during training. DanceGRPO \citep{xue2025dancegrpo} establishes a unified optimization framework that enhances diffusion models and rectified flows over diverse generative tasks. DiffusionNFT \citep{zheng2025diffusionnft} directly optimizes diffusion models in the forward process through flow matching, where an implicit policy improvement direction is constructed by contrasting positive and negative generations, thereby seamlessly integrating reinforcement signals into a supervised learning objective. TempFlow-GRPO \citep{he2025tempflow} explicitly captures and leverages the temporal structure inherent in flow-based generative processes. MixGRPO \citep{li2025mixgrpo} employs a sliding-window strategy that restricts SDE-based sampling and RL-based optimization to a designated temporal window, while relying on ODE-based sampling outside this interval. ChunkGRPO \citep{luo2025sample} aggregates consecutive timesteps into chunks and performs optimization at the chunk level. Stepwise-Flow-GRPO \citep{savani2026stepwise} assigns step-wise reward gains using intermediate reward estimation and improved SDE sampling to address uniform credit assignment, while DenseGRPO \citep{deng2026densegrpo} introduces step-wise dense rewards and reward-aware timestep-dependent exploration for better alignment. Starting from shared initial noise samples, TreeGRPO \citep{ding2025treegrpo} branches strategically to produce multiple candidate trajectories while leveraging their shared prefixes to reduce redundant computation. BranchGRPO \citep{li2025branchgrpo} organizes rollouts as a branching tree, leveraging shared prefixes to reduce computational cost while pruning low-value paths and redundant branches. Dynamic-TreeRPO \citep{fu2025dynamic} accelerates training convergence and promotes efficient exploration of the search space by introducing dynamically adaptive clipping boundaries guided by reward signals, together with a training strategy that incorporates supervised fine-tuning. Neighbor-GRPO \citep{he2025neighbor} promotes trajectory diversity by introducing controlled perturbations to the initial ODE noise and updates the model via a softmax distance–based surrogate leaping policy. Multi-GRPO \citep{lyu2025multi} enhances multi-objective optimization through two complementary mechanisms: tree-based trajectories that improve credit assignment for early denoising stages via multiple descendant evaluations, and reward-based grouping that decouples advantage estimation across objectives to mitigate reward interference. Pref-GRPO \citep{wang2025pref} employs a pairwise preference–based reward to guide GRPO optimization, transitioning the learning objective from score maximization to preference alignment to achieve enhanced training stability. SuperFlow \citep{chen2025superflow} is a RL training framework for flow-based models that adaptively adjusts group sizes via variance-aware sampling and computes step-level advantages in a manner consistent with continuous-time flow dynamics. GDRO \citep{wang2026gdro} is a sampler-agnostic, offline training method that avoids online sampling and ODE-to-SDE conversions. Flash-DMD \citep{chen2025flash} accelerates convergence by integrating distillation with joint RL-based refinement. DDRL \citep{ye2025data} introduces a novel framework that anchors the policy to an off-policy data distribution through the use of forward KL divergence. UniGRPO \citep{liu2026unigrpo} jointly optimizes reasoning-driven text and image generation by integrating GRPO with an enhanced FlowGRPO, establishing a scalable baseline for post-training interleaved multimodal models. Smart-GRPO \citep{yu2025smart} formulates noise optimization as an iterative search procedure, where candidate perturbations are decoded and evaluated to guide the noise distribution toward higher-reward areas. DiverseGRPO \citep{liu2025diversegrpo} incorporates a distributional creativity bonus together with structure-aware regularization to recalibrate reward signals and align denoising-stage constraints with diversity preservation. BeautyGRPO \citep{yang2026beautygrpo} combines a fine-grained aesthetic reward model with dynamic path-guided RL sampling to align face retouching with human preferences while preserving image fidelity. E-GRPO \citep{zhang2026grpo} restructures diffusion sampling with entropy-aware step merging and group-normalized advantages to address sparse and ambiguous rewards in multi-step denoising. Furthermore, Edit-R1 \citep{li2025uniworld} employs an MLLM as a training-free reward model, using its output logits to deliver fine-grained feedback and adapt the DiffusionNFT framework for image editing tasks; HP-Edit \citep{li2026hp} introduces HP-Scorer, an automatic evaluator aligned with human preferences, which is used as the reward function for post-training the editing model.

The second line of research explores online reinforcement learning optimization for \textbf{autoregressive models}. By employing multi-dimensional reward functions, AR-GRPO \citep{yuan2025ar} refines the outputs of autoregressive models, evaluating generated images in terms of perceptual quality, realism, and semantic fidelity. STAGE \citep{ma2025stage} achieves stable and generalizable training by leveraging advantage/KL reweighting along with an entropy-based reward. GCPO \citep{zhang2025group} facilitates effective policy optimization on critical tokens, which are identified from three perspectives: causal dependency, entropy-induced spatial structure, and token diversity guided by RLVR. Moreover, EARL \citep{ahmadi2025promise} extends similar paradigm to text-guided image editing tasks.

The third line of research investigates online reinforcement learning optimization for \textbf{masked generative models}. Mask-GRPO \citep{luoreinforcement} redefines transition dynamics and models the unmasking process as a multi-step decision-making problem. Co-GRPO \citep{zhou2025co} co-optimizes model and schedule parameters via a shared reward, bypassing expensive backpropagation through multi-step generation.  MaskFocus \citep{zhang2025maskfocus} evaluates the importance of each generation step using image embedding similarity and designates the most informative steps as critical, thereby achieving a balance between computational efficiency and overall performance.

\section{Detailed Results on the UniGenBench++ Benchmark}
\label{Detailed Results on the UniGenBench++ benchmark}
\begin{table}[!h]
\centering
\caption{Detailed Results on the UniGenBench++ Benchmark for \textcolor{myblue}{English Short Prompt Evaluation} and \textcolor{myorange}{English Long Prompt Evaluation}. Gemini 2.5 Pro \citep{comanici2025gemini} is used as the MLLM for evaluation.}
\resizebox{\linewidth}{!}{
\begin{tabular}{lc|c c |cc c c c c |cc c c c c |cc c c |cc |cc c |cc|cc}
\toprule
\rowcolor{myblue!75!white}\multicolumn{29}{c}{\textbf{English Short Prompt Evaluation--Detailed}} \\
\midrule
Method & Overall & Style & \makecell{World\\Know.}
& \multicolumn{6}{c|}{Attribute} 
& \multicolumn{6}{c|}{Action} 
& \multicolumn{4}{c|}{Relationship}
& \multicolumn{2}{c|}{Compound}
& \multicolumn{3}{c|}{Grammar}
& \multicolumn{2}{c|}{Layout}
& \makecell{Logic.\\Reason.}
& Text \\
\cmidrule(lr){5-10}
\cmidrule(lr){11-16}
\cmidrule(lr){17-20}
\cmidrule(lr){21-22}
\cmidrule(lr){23-25}
\cmidrule(lr){26-27}
 &  &  & &
Quant. & Express. & Materi. & Size & Shape & Color
& Hand & \makecell{Full\\Body} & Animal & \makecell{Non\\Contact} & Contact & State
& Compos. & Sim. & Inclus. & Compare.
& Imagin. & \makecell{Feat\\Match.}
& \makecell{Pron\\Ref.} & Consist. & Neg.
& 2D & 3D
&  &  \\
\midrule

FLUX.1 Dev & 61.08 & 86.20 & 87.66 & 57.55 & 94.17 & 50.64 & 60.00 & 67.36 & 63.68 & 68.48 & 51.19 & 87.66 & 51.92 & 75.74 & 60.20 & 61.72 & 68.48 & 65.88 & 27.52 & 44.79 & 62.50 & 44.62 & 74.26 & 64.81 & 67.80 & 49.74 & 27.52 & 37.64 \\

DanceGRPO & 60.64 & 76.30 & 85.28 & 65.57 & 96.67 & 46.79 & 53.12 & 65.97 & 65.57 & 66.30 & 54.17 & 85.28 & 58.33 & 69.12 & 57.14 & 61.72 & 61.41 & 70.61 & 37.61 & 48.18 & 70.83 & 38.85 & 69.12 & 58.80 & 72.73 & 67.09 & 37.61 & 28.16 \\

SLAS & 62.60 & 82.30 & 81.96 & 59.91 & 96.67 & 58.33 & 65.62 & 66.67 & 73.58 & 72.28 & 63.10 & 81.96 & 59.62 & 77.94 & 73.47 & 66.41 & 69.57 & 70.61 & 41.06 & 53.91 & 69.44 & 38.85 & 74.26 & 50.93 & 73.11 & 60.46 & 41.06 & 28.16 \\ \midrule

\rowcolor{myorange!75!white}\multicolumn{29}{c}{\textbf{English Long Prompt Evaluation--Detailed}} \\
\midrule
Method & Overall & Style & \makecell{World\\Know.}
& \multicolumn{6}{c|}{Attribute} 
& \multicolumn{6}{c|}{Action} 
& \multicolumn{4}{c|}{Relationship}
& \multicolumn{2}{c|}{Compound}
& \multicolumn{3}{c|}{Grammar}
& \multicolumn{2}{c|}{Layout}
& \makecell{Logic.\\Reason.}
& Text \\
\cmidrule(lr){5-10}
\cmidrule(lr){11-16}
\cmidrule(lr){17-20}
\cmidrule(lr){21-22}
\cmidrule(lr){23-25}
\cmidrule(lr){26-27}
 &  &  & &
Quant. & Express. & Materi. & Size & Shape & Color
& Hand & \makecell{Full\\Body} & Animal & \makecell{Non\\Contact} & Contact & State
& Compos. & Sim. & Inclus. & Compare.
& Imagin. & \makecell{Feat\\Match.}
& \makecell{Pron\\Ref.} & Consist. & Neg.
& 2D & 3D
&  &  \\
\midrule

FLUX.1 Dev & 69.22 & 90.37 & 89.45 & 79.79 & 63.69 & 80.89 & 85.23 & 73.25 & 86.61 & 66.67 & 65.82 & 67.39 & 56.70 & 56.32 & 69.15 & 65.43 & 64.74 & 79.89 & 68.20 & 67.90 & 65.19 & 81.75 & 69.05 & 60.71 & 80.79 & 75.36 & 47.06 & 35.60 \\
DanceGRPO & 65.72 & 79.24 & 87.28 & 73.94 & 62.85 & 76.96 & 81.25 & 69.93 & 84.14 & 61.86 & 63.29 & 68.12 & 50.89 & 60.34 & 71.23 & 62.88 & 61.86 & 76.72 & 72.33 & 70.66 & 60.75 & 81.35 & 59.52 & 50.71 & 78.81 & 74.46 & 49.02 & 23.91 \\
SLAS & 68.32 & 83.55 & 87.43 & 73.40 & 68.16 & 75.76 & 80.49 & 75.00 & 87.24 & 62.18 & 62.03 & 73.19 & 64.29 & 60.06 & 72.22 & 61.61 & 71.47 & 85.06 & 69.66 & 71.93 & 63.79 & 84.92 & 57.54 & 50.36 & 80.51 & 78.08 & 53.19 & 30.43 \\
\bottomrule
\end{tabular}
}
\label{Detailed UniGenBench++ Benchmark Table}
\end{table}

Building upon the primary results, \Cref{Detailed UniGenBench++ Benchmark Table} provides a comprehensive, fine-grained breakdown across 27 detailed sub-categories. In the short prompt setting, SLAS achieves consistent performance gains across fine-grained tasks, with notable enhancements in action understanding sub-tasks like full-body motion modeling and contact interaction, as well as relationship modeling tasks such as compositional relationship reasoning. These fine-grained improvements serve as the core foundation that supports SLAS's overall performance lead over DanceGRPO in this setting. In the more challenging long prompt setting, SLAS maintains its leading position over DanceGRPO across the majority of sub-dimensions. It particularly excels at long-text-dependent complex tasks, such as logical reasoning and 3D layout modeling, which are critical for real-world applications. These results systematically demonstrate that SLAS effectively improves fine-grained generation under varying prompt lengths, yielding more balanced and reliable performance on complex text-to-image tasks. This further shows that our method enables the model to better capture fine-grained requirements in user prompts.

\section{Additional Qualitative Examples}
\label{Additional Qualitative Examples}
In this section, we present additional qualitative comparisons between SLAS and DanceGRPO. SLAS yields better semantic coherence, spatial consistency, and compositional reasoning, while reducing high-frequency artifacts (e.g., over-stylization and exaggerated contrast) observed in DanceGRPO.

\begin{figure*}[ht]
\centering
\includegraphics[width=1.0\linewidth]{NeurIPS_2026_appendix_1.jpg}
\label{visual_appendix_1}
\end{figure*}
\begin{figure*}[ht]
\centering
\includegraphics[width=1.0\linewidth]{NeurIPS_2026_appendix_2.jpg}
\label{visual_appendix_2}
\end{figure*}
\clearpage
\section{Product-level E-commerce Image Editing Application}
\label{Product-level E-commerce Image Editing Application}

We have implemented the proposed method in a large-scale, real-world commercial setting. Specifically, it is integrated into a product-level digital human livestreaming platform for e-commerce. To support dynamic and high-fidelity product presentation, we develop a specialized model for commodity manipulation and seamless item transitions, enabling coherent and visually consistent product showcases under diverse conditions. The task presents two key challenges. First, it requires high visual fidelity, imposing strict constraints on spatial alignment, motion consistency, and appearance preservation. This is particularly challenging for heterogeneous product categories with large variations in geometry, texture, and scale, making it difficult to maintain both realism and cross-commodity consistency. Second, the problem involves complex human-commodity interaction modeling. The model jointly reason about human pose, hand–commodity contact, and occlusion relationships, while ensuring that manipulated commodity remain physically plausible and semantically consistent. 

These challenges become even more pronounced during reinforcement post-training. Furthermore, we apply Edit-R1 \citep{li2025uniworld} and the proposed method separately during this process for comparison. The reward is designed based on three dimensions: commodity presence, commodity similarity, and hand–commodity interaction realism. The given text prompt is ``The person in Figure 1 is holding the commodity in Figure 2''. Several representative examples are presented as follows.

\begin{figure}[h]
\centering
\includegraphics[width=1.0\linewidth]{NeurIPS_2026_appendix_editing_1.jpg}
\label{visual_appendix_editing_1}
\end{figure}
\clearpage
\begin{figure}[t]
\centering
\includegraphics[width=1.0\linewidth]{NeurIPS_2026_appendix_editing_2.jpg}
\label{visual_appendix_editing_2}
\end{figure}

As shown in these examples, our method  outperforms the baseline Edit-R1 across diverse commodity categories, delivering superior results in complex e-commerce scenarios. The baseline often suffers from critical rendering failures: it frequently produces duplicate commodities, extraneous disembodied hands, stiff or misaligned interaction poses, and distorted commodity appearances, failing to naturally place the target commodity in the person’s hand. In contrast, our method effectively adapts the target commodity to the scene. It properly scales the commodity to fit the person’s hand while preserving the original appearance and fine-grained details, even for commodities with large variations in shape and size. Meanwhile, it generates natural and physically plausible hand–commodity interactions, with accurate contact alignment and semantically consistent poses, resulting in coherent and realistic commodity presentations. Across the three core evaluation dimensions (commodity presence, commodity similarity, and hand–commodity interaction realism), our method achieves significantly better performance, providing a robust solution to the fundamental challenges of high-fidelity commodity presentation in e-commerce digital human livestreaming.

\section{Broader Impacts}
\label{Broader Impacts}
While super-linear advantage shaping demonstrates strong performance, all advances in text-to-image generation carry ethical considerations. In particular, the enhanced models might be misused to generate harmful, hateful, misleading, or sexually explicit content. Therefore, the models will remain private until thorough safety evaluations and internal ethical reviews are completed. We will also urge potential future users to refrain from generating unsafe content.

\end{document}